\newcommand{\norm}[1]{\left\lVert#1\right\rVert}
\newtheorem{prop}{Proposition}
\newtheorem{remark}{Remark}
\newcommand{\xx}{\mathbf{x}}
\newcommand{\yy}{\mathbf{y}}
\newcommand{\R}{\mathbb{R}}
\newcommand{\bset}{\mathcal{X}}
\newcommand{\sset}{\mathcal{S}}
\newcommand{\kin}{k_{\bset}} 
\newcommand{\kout}{k_{\text{H}}} 
\newcommand{\kde}{k_{\text{DE}}}
\newcommand{\kds}{k_{0}}
\newcommand{\sets}{\mathbf{S}}
\newcommand{\rank}{\operatorname{rank}}
\newcommand{\Hkin}{\mathcal{H}_{\kin}}
\newcommand{\Sfin}{\sset_{\text{fin}}(\bset)} 
\newcommand{\emb}{\mathcal{E}}
\newcommand{\embr}{\mathcal{E}_{\rin}}
\newcommand{\rin}{r_{\bset}}
\newcommand{\varin}{\sigma^2_{\bset}}
\newcommand{\sigin}{\sigma_{\bset}}
\newcommand{\psiin}{\psi_{\bset}}
\newcommand{\thetain}{\theta_{\bset}}
\newcommand{\rout}{r_{H}}
\newcommand{\varout}{\sigma^2_H}
\newcommand{\sigout}{\sigma_H}
\newcommand{\thetaout}{\theta_{H}}
\title{
Kernels over Sets of Finite Sets using RKHS Embeddings, with Application to Bayesian (Combinatorial) Optimization 
}
\newcommand*\samethanks[1][\value{footnote}]{\footnotemark[#1]}
\author[1]{Poompol Buathong\thanks{PB and DG contributed equally to this work and are in alphabetical order.}}
\author[2,3]{David Ginsbourger\samethanks}
\author[1,4]{Tipaluck Krityakierne}
\affil[1]{Department of Mathematics, Faculty of Science, Mahidol University, Bangkok, Thailand}
\affil[2]{Uncertainty Quantification and Optimal Design group, Idiap Research Institute, Centre du Parc, Rue Marconi 19, PO Box 592, CH-1920 Martigny, Switzerland.}
\affil[3]{Institute of Mathematical Statistics and Actuarial Science, Department of Mathematics and Statistics, University of Bern, Alpeneggstrasse 22, CH-3012 Bern, Switzerland}
\affil[4]{Centre of Excellence in Mathematics, CHE, Bangkok, Thailand}
\begin{document}
	
\maketitle

\thispagestyle{empty}

\begin{abstract}
	We focus on kernel methods for set-valued inputs and their application to Bayesian set optimization, notably combinatorial optimization. 
	We investigate two classes of set kernels that both rely on Reproducing Kernel Hilbert Space embeddings, namely the ``Double Sum'' (DS) kernels recently considered in Bayesian set optimization, and a class introduced here called ``Deep Embedding'' (DE) kernels that essentially consists in applying a radial kernel on Hilbert space on top of the canonical distance induced by another kernel such as a DS kernel. 
	We establish in particular that while DS kernels typically suffer from a lack of strict positive definiteness, vast subclasses of DE kernels built upon DS kernels do possess this property, enabling in turn combinatorial optimization without requiring to introduce a jitter parameter.  
	Proofs of theoretical results about considered kernels are complemented by a few practicalities regarding hyperparameter fitting. 
	We furthermore demonstrate the applicability of our approach in prediction and optimization tasks, relying both on toy examples and on two test cases from mechanical engineering and hydrogeology, respectively. 
	Experimental results highlight the applicability and compared merits of the considered approaches while opening new perspectives in prediction and sequential design with set inputs.   
\end{abstract}

\section{Introduction}

Kernel methods \citep{Aronszajn1950a, Kimeldorf.Wahba1970, Scholkopf.Smola2002, Saitoh2016} constitute a versatile framework for a variety of tasks in classification \citep{Steinwart2008}, function approximation based on scattered data \citep{Wendland2005}, and probabilistic prediction \citep{rasmussen2006gaussian}.  
One of the outstanding features of Gaussian Process (GP) prediction, in particular, is its usability to design Bayesian Optimization (BO) algorithms \citep{mockus1978application, jones1998efficient, Frazier2018} and further sequential design strategies \citep{Risk2018, Binois, Bect.etal}. While in most usual GP- and BO-related contributions the focus is on continuous problems with vector-valued inputs, there has been a growing interest recently for situations involving discrete and mixed discrete-continuous inputs \citep{Kondor2002, Gramacy2010,  Fortuin, Roustant, Garrido-Merchan2018, Ru2019, Griffiths2019}. Here we focus specifically on kernels dedicated to finite set-valued inputs and their application to GP modelling and BO, notably (but not only) in combinatorial optimization.

A number of prediction and optimization problems from various application domains involve finite set-valued inputs, 
encompassing for instance sensor network design \citep{garnett2010bayesian}, simulation-based investigation of the mechanical behaviour of bi-phasic materials depending on the positions of inclusions \citep{ginsbourger2016design}, inventory system optimization \citep{Salemi2019}, selection of starting centers in clustering algorithms \citep{Kim}, speaker recognition and image texture classification (as mentioned by \cite{Desobry2005}), natural language processing tasks with bags of words \citep{Pappas2017}, or optimal positioning of landmarks in shape analysis \citep{Iwata2012}, to cite a few. 
Yet, the number of available kernel methods for efficiently tackling such problems is still quite moderate, although the topic has gained interest among the machine learning and further research communities in the last few years. 
In particular, early investigations regarding the definition of positive definite kernels on finite sets encompass \citep{Kondor2003, Grauman2007}, and also indirectly \citep{Cuturi.etal2005} where kernels between atomic measures are introduced. 
Kernels on finite sets that have been used in BO include radial kernels with respect to the earth mover's distance \citep[][where the question of their positive definiteness is not discussed]{garnett2010bayesian}, kernels on graphs implicitly defined via precision matrices in the context of Gaussian Markov Random Fields in \citep{Salemi2019}, and the class used in \citep{Kim} and originating in \citep{Haussler1999, Gaertner2002} that we refer to as \textit{Double Sum} (DS) kernels. 
From the combinatorial optimization side, while an approach relying on Bayesian networks was considered already in \citep{Larraiiaga2000}, the topic has recently attracted attention in GP-based BO with respect to set inputs (see for instance \cite{Baptista2018} where the emphasis is not on the employed kernels, and \cite{Oh2019} where graph representations are used), and also in GP-based BO over the latent space of a variational autoencoder \citep{Griffiths2019}.

Our approach here is to leverage the fertile framework of Reproducing Kernel Hilbert Space Embeddings \citep{Berlinet.Thomas-Agnan2004, Smola2007, Sriperumbudur2011, Muandet2017} to 
analyze DS kernels and the introduced \textit{Deep Embedding} (DE) kernels, that consist in chaining radial kernels in Hilbert space with the canonical distance associated with set kernels like DS ones. As we establish, wide classes of DE kernels are strictly positive definite which contrasts with the typical case of DS kernels. 
We present in turn a few additional results pertaining to the parametrization of DE kernels and to related hyperparameter fitting, including geometrical considerations around the choice of hyperparameter bounds. 
Section~\ref{sec2} is mainly dedicated to the exposition and theoretical analysis of the considered classes of kernels, complemented by practicalities regarding hyperparameter fitting. In Section~3, numerical experiments are discussed that compare DS and DE kernels in prediction and optimization tasks, both on analytical and on two application test cases, namely in mechanical engineering with plasticity simulations of a bi-phasic material tackled in \citep{ginsbourger2016design}, and in hydrogeology with an original monitoring well selection problem based on the contaminant source localization test case from \citep{pirot2019contaminant}.

\section{Set Kernels via RKHS Embeddings}
\label{sec2}
\subsection{Notation and Settings}

We focus on positive definite kernels defined over subsets of some base set $\bset$. Depending on the cases, $\bset$ may be finite or infinite. The considered set of subsets of $\bset$, denoted $\sset(\bset)$, may be the whole power set $\mathcal{P}(\bset)$ or a subset thereof, e.g. $\mathcal{S}_{p}(\bset)$ (also traditionally noted $[\bset]^p$ in set theory) the set of $p$-element subsets of $\bset$ (where $p \in \mathbb{N}$, with $p\leq \#\bset$ in case of a finite $\bset$ with cardinality $\#\bset$), or the set of all (non-void) finite subsets of $\bset$ denoted here $\Sfin=\cup_{p\geq 1}\mathcal{S}_{p}(\bset)$. Given a positive definite kernel $\kin$ over $\bset$ and the associated Reproducing Kernel Hilbert Space $\Hkin$, we call here \textit{embedding of $\Sfin$ in $\Hkin$} the mapping 
\begin{equation}
\label{def_embed}
\emb: S \in \Sfin \to 
\frac{1}{\# S} \sum_{\xx\in S} \kin(\xx,\cdot) \in \Hkin.
\end{equation}
Note that this ``set embedding'' coincides with the Kernel Mean Embedding \citep{Muandet2017} in $\Hkin$ of the uniform probability distribution over $S$. 

\subsection{From Linear to Deep Embedding Kernels}

A natural idea to create a positive definite kernel on $\Sfin$ from this embedding is to plainly take: 
\begin{equation}
\label{def_dsk}
\begin{split}
\kds(S,S') 
= \frac{1}{\# S \# S'}  \sum_{\substack{\xx\in S\\ \xx'\in S'}} \kin(\xx,\xx'),
\end{split}
\end{equation}
which is none other than the kernel used in \citep{Kim} and that we refer to here as double sum kernel. As we will see in the next section and in the applications, this positive definite kernel may suffer in some settings from its lack of strict positive definiteness. Yet it appears as a crucial building block in the class of strictly positive definite kernels that we introduce here. The first step is to consider the ``canonical distance'' on  $\Sfin$ induced by the kernel $k_{0}$, namely 
\begin{align}
\label{def_cd}
d_{\emb}(S,S')
=\sqrt{\kds(S,S)+\kds(S',S')-2 \kds(S,S')}.
\end{align}
Coming now to the proposed class of \textit{Deep Embedding} kernels per se, these are obtained by composing what can be called a radial kernel on Hilbert space (See \citep{bachoc2018gaussian3} 
for a reminder) with $d_{\emb}$ above. We hence obtain DE kernels on $\Sfin$ by writing 
\begin{equation}
\label{eq_proposed_k}
\begin{split}
\kde(S,S')
&=\kout \circ d_{\emb} (S,S'), 
\end{split}
\end{equation}
with $\kout: [0,\infty) \to \R$ being such that $(h,h')\in \mathcal{H}^2 \to \kout(||h-h'||_{\mathcal{H}})$ is positive definite for any Hilbert space $(\mathcal{H}, \langle \cdot, \cdot, \rangle_{\mathcal{H}})$.  
We establish next the positive definiteness of such kernels (See \citep{Berg.etal1984, Christmann2010} for similar constructions) and further provide sufficient conditions for their strict positive definiteness on $\Sfin$, a feature that $\kds$ is lacking, as we show too, which may lead to invertibility issues for finite $\bset$, e.g. in combinatorial optimization.

\subsection{Main Theoretical Results}

\begin{prop}
	\label{prop1}
	Let $\bset$ be a set, $\kin$ be a positive definite kernel on $\bset$ with associated reproducing kernel Hilbert space $\Hkin$, 
	and $\Sfin$ be the set of non-empty finite subsets of $\bset$. 
	Let $\emb: S \in \Sfin \mapsto \Hkin$, $\kds: \Sfin \times \Sfin \mapsto \R$, $d_{\emb}: \Sfin \times \Sfin \mapsto [0,\infty)$ be defined by Equations~\ref{def_embed},\ref{def_dsk},\ref{def_cd}, respectively. Then, 
	\begin{description}
		\item[a)] $\kds(S,S')=\langle \emb(S), \emb(S') \rangle_{\Hkin}$ for any $S,S'\in \Sfin$, and $\kds$ is positive definite on $\Sfin$ while $d_{\emb}$ is a pseudometric on $\Sfin$.
	\end{description}
	Let us furthermore introduce for $n \geq 2$ the sets 
	\begin{scriptsize}
		\begin{align*}
		A_n=\Biggl\{ &
		\Biggl(
		\overbrace{\frac{1}{n_1},\ldots,\frac{1}{n_1}}^{(n_1-\ell) \text{ times}},
		\overbrace{\frac{n_2-n_1}{n_1 n_2},\ldots,\frac{n_2-n_1}{n_1 n_2}}^{\ell \text{ times}},
		\overbrace{\frac{-1}{n_2},\ldots,\frac{-1}{n_2}}^{(n_2-\ell) \text{ times}}
		\Biggl), 
		\\
		& n_1, n_2 \geq 1, \ell \geq 0: n_1+n_2+\ell= n \Biggl\} \subset \R^n \ (n\geq 2). 
		\end{align*}
	\end{scriptsize}
	\begin{description}
		\item[b)] Then, the following assertions are equivalent: 
		\begin{description}
			\item[i)] $\kin$ satisfies $\sum_{i=1}^n \sum_{j=1}^n a_i a_j \kin(\xx_i,\xx_j) >0$ for all $n\geq 2$, pairwise distinct $\xx_1,\ldots, \xx_n \in \bset$, and $(a_1,\dots,a_n) \in A_n$. 
			\item[ii)] $\emb$ is injective. 
			\item[iii)] $d_{\emb}$ is a metric on $\Sfin$. 
		\end{description}	
	\end{description}
	In particular, if $\kin$ is strictly positive definite on $\bset$, then all three conditions above are fulfilled. 
\end{prop}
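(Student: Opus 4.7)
The plan for part (a) is to exploit the reproducing property directly. I would expand $\langle \emb(S), \emb(S') \rangle_{\Hkin}$ via bilinearity, invoke $\langle \kin(\xx,\cdot), \kin(\xx',\cdot) \rangle_{\Hkin}=\kin(\xx,\xx')$, and read off $\kds(S,S')$ after pulling out the factor $1/(\#S \cdot \#S')$. Positive definiteness of $\kds$ is then automatic, since $\kds$ appears as a Gram kernel induced by the Hilbert-space-valued feature map $\emb$. The pseudometric property of $d_{\emb}$ follows immediately from the identity $d_{\emb}(S,S')=\|\emb(S)-\emb(S')\|_{\Hkin}$, whose right-hand side, being a seminorm of a difference, inherits non-negativity, symmetry, and the triangle inequality.

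For part (b), I would start with the easy equivalence (ii) $\Leftrightarrow$ (iii): combining the identity $d_{\emb}(S,S')=\|\emb(S)-\emb(S')\|_{\Hkin}$ with the pseudometric property already established, $d_{\emb}$ is a metric exactly when $d_{\emb}(S,S')=0$ forces $S=S'$, which is the very definition of injectivity of $\emb$.

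The heart of the argument is (i) $\Leftrightarrow$ (ii). For any two distinct sets $S,S' \in \Sfin$ with $\#S=n_1$, $\#S'=n_2$, and $\#(S\cap S')=\ell$, I would decompose
\begin{equation*}
\emb(S)-\emb(S') = \frac{1}{n_1}\sum_{\xx \in S\setminus S'}\kin(\xx,\cdot) + \frac{n_2-n_1}{n_1 n_2}\sum_{\xx \in S\cap S'}\kin(\xx,\cdot) - \frac{1}{n_2}\sum_{\xx \in S'\setminus S}\kin(\xx,\cdot),
\end{equation*}
and then take the squared norm via the reproducing property to get a quadratic form $\sum_{i,j} a_i a_j \kin(\xx_i, \xx_j)$ indexed by the distinct points of $S\cup S'$, whose coefficient tuple $(a_i)$ is precisely of the three-block shape defining $A_n$. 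Injectivity of $\emb$ thus amounts to the strict positivity of this quadratic form for every pair with $S\neq S'$; conversely, any tuple in $A_n$ arises in this way by selecting $S$ and $S'$ whose symmetric difference and intersection have the prescribed cardinalities. The final assertion follows at once: if $\kin$ is strictly positive definite on $\bset$, condition (i) holds because each tuple in $A_n$ associated to $S\neq S'$ is non-zero, the three block-values $1/n_1$, $(n_2-n_1)/(n_1n_2)$, and $-1/n_2$ being unable to vanish simultaneously under $S\neq S'$.

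The only potentially delicate point I anticipate is purely combinatorial: making transparent the bijection between the partition $(S\setminus S', S\cap S', S'\setminus S)$ and the three-block structure parametrizing $A_n$, so that the quadratic form $\|\emb(S)-\emb(S')\|_{\Hkin}^2$ is identified with an element in condition (i) and vice versa. Once this correspondence is nailed down, the rest of the argument reduces to routine RKHS identities and the standard link between canonical distances and kernel (strict) positive definiteness.
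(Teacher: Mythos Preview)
Your proposal is correct and follows essentially the same route as the paper's proof: bilinearity plus the reproducing property for part~\textbf{a)}, the identity $d_{\emb}(S,S')=\|\emb(S)-\emb(S')\|_{\Hkin}$ for \textbf{ii)}$\Leftrightarrow$\textbf{iii)}, and the three-block decomposition of $\emb(S)-\emb(S')$ over $S\setminus S'$, $S\cap S'$, $S'\setminus S$ to identify $\|\emb(S)-\emb(S')\|_{\Hkin}^2$ with a quadratic form whose coefficient vector lies in $A_n$, with the converse obtained by reconstructing $S,S'$ from the block sizes. Your handling of the final assertion (non-vanishing of the coefficient vector when $S\neq S'$) is likewise the same observation the paper relies on.
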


\begin{prop}[Non-strict positive definiteness of double sum kernels]
	\label{th_nonstrict_dsk}
	Let us keep the notation of Proposition~\ref{prop1} and denote furthermore in the case of a finite set $\bset$ with cardinality $c\geq 1$ and elements $\mathbf{X}_c=(\xx_1, \dots,\xx_c)$ by $u: S\in\Sfin \to u(S)=\frac{1}{\# S} (\mathbf{1}_{\xx_i \in S})_{1\leq i \leq c} \in \R^c$ the mapping returning for any nonempty subset of $\bset$ a vector with components $\frac{1}{\# S}$ or $0$ depending whether $\xx_i\in S$ or not. 
	Then we have:  
	\begin{description}
		\item[a)] For $\bset$ finite, 
		for any $S,S' \in \Sfin$,
		\begin{equation}
		\label{compact_dsk}
		\kds(S,S')=u(S)^{T} \kin(\mathbf{X}_c) u(S').
		\end{equation}
		Consequently, for $q\geq 1$ and $\sets=(S_1,\dots,S_q)\in \sset^q$, the covariance matrix $\kds(\sets)$ associated with $\kin$ and $\sets$ can be compactly written as
		\begin{equation}
		\label{compact_dsk_matrix}
		\kds(\sets)=U(\sets)^{T}\kin(\mathbf{X}_c) U(\sets),
		\end{equation}	
		with the notation $U(\sets)=[u(S_1),\dots,u(S_q)]$.
		\item[b)] For arbitrary $\bset$, the two following assertions are mutually exclusive
		\begin{description}
			\item[i)] $\# \bset =1$ and $\kin$ is non-zero.
			\item[ii)] $\kds$ is not strictly positive definite on $\Sfin$. 
		\end{description}	
	\end{description} 
\end{prop}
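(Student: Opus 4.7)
The two parts call for rather different arguments: part (a) reduces to a direct indicator-function calculation, while part (b) rests on exhibiting an explicit linear dependence among three carefully chosen set embeddings, combined with an inspection of the degenerate case $\#\bset=1$. For part (a), I would plug the definition $u(S)_i = \frac{1}{\# S}\mathbf{1}_{\xx_i \in S}$ directly into the expression $u(S)^{T}\kin(\mathbf{X}_c)u(S')$ and observe that the double sum $\sum_{i,j=1}^{c} u(S)_i \kin(\xx_i,\xx_j) u(S')_j$ collapses, thanks to the indicator factors, to the sum over $\xx_i \in S$ and $\xx_j \in S'$ weighted by $1/(\# S \, \# S')$, which is exactly $\kds(S,S')$. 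The matrix identity (Equation~\ref{compact_dsk_matrix}) then follows entry-wise from (Equation~\ref{compact_dsk}).

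For part (b), I would argue by cases on $\#\bset$. If $\#\bset = 1$, then $\Sfin = \{\{\xx_1\}\}$ has a single element, and testing strict positive definiteness reduces to checking that the scalar $\kds(\{\xx_1\},\{\xx_1\}) = \kin(\xx_1,\xx_1)$ is strictly positive; since $\kin$ is positive definite this value is nonnegative, and under assumption (i) the kernel is non-zero on the singleton $\bset$ so $\kin(\xx_1,\xx_1) > 0$, hence $\kds$ is strictly positive definite and (ii) is ruled out. If instead $\#\bset \geq 2$, I would pick two distinct elements $\xx, \xx' \in \bset$ and set $S_1 = \{\xx\}$, $S_2 = \{\xx'\}$, $S_3 = \{\xx,\xx'\}$. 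Directly from the definition of $\emb$ one has $\emb(S_3) = \tfrac{1}{2}\bigl(\emb(S_1) + \emb(S_2)\bigr)$, so $\emb(S_1) + \emb(S_2) - 2\,\emb(S_3) = 0$ in $\Hkin$. Invoking Proposition~\ref{prop1}(a), which identifies $\kds(S,S') = \langle \emb(S), \emb(S') \rangle_{\Hkin}$, and taking $a = (1,1,-2) \neq 0$ yields
\[
\sum_{i,j=1}^{3} a_i a_j \kds(S_i, S_j) = \Bigl\| a_1 \emb(S_1) + a_2 \emb(S_2) + a_3 \emb(S_3) \Bigr\|^2_{\Hkin} = 0,
\]
so $\kds$ fails strict positive definiteness, i.e., (ii) holds. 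The remaining subcase, $\#\bset = 1$ with $\kin \equiv 0$, makes $\kds$ identically zero, which again prevents strict positive definiteness. Combining these subcases shows that (i) and (ii) cannot hold simultaneously.

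I do not anticipate a serious obstacle: the core insight is just the convexity identity that makes $\emb(\{\xx,\xx'\})$ the midpoint of $\emb(\{\xx\})$ and $\emb(\{\xx'\})$, which structurally forces strict positive definiteness of $\kds$ to fail as soon as $\bset$ has at least two elements. The only item demanding a little care is the boundary case $\#\bset = 1$, where one must reconcile the ``non-zero'' hypothesis on $\kin$ with the $n=1$ instance of the strict positive definiteness condition in order to clearly separate assertions (i) and (ii).
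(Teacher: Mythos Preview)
Your proof is correct. Part (a) is handled identically to the paper. For part (b), both you and the paper use the same three sets $S_1=\{\xx\}$, $S_2=\{\xx'\}$, $S_3=\{\xx,\xx'\}$, but the mechanism for showing the $3\times 3$ Gram matrix is singular differs. The paper restricts to the two-element set $\{\xx,\xx'\}$, applies part~(a) of this very proposition to write $\kds(\mathbf{S})=U(\mathbf{S})^{T}\kin(\mathbf{X})U(\mathbf{S})=M(\mathbf{S})^{T}M(\mathbf{S})$ with $M(\mathbf{S})=\kin(\mathbf{X})^{1/2}U(\mathbf{S})$, and concludes by the rank bound $\rank(\kds(\mathbf{S}))\leq \rank(\kin(\mathbf{X}))\leq 2$. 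You instead invoke Proposition~\ref{prop1}(a) and exhibit the explicit null vector $(1,1,-2)$ via the midpoint identity $\emb(S_3)=\tfrac{1}{2}(\emb(S_1)+\emb(S_2))$ in $\Hkin$. Your route is slightly more elementary (no matrix square root, no appeal to part~(a), works directly for infinite $\bset$) and pinpoints exactly which linear combination vanishes; the paper's rank argument, on the other hand, generalizes more readily to the observation in the subsequent Remark that $\kds(\mathbf{S})$ is singular whenever $q>c$. You also explicitly dispose of the subcase $\#\bset=1$ with $\kin\equiv 0$, which the paper leaves implicit.
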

\begin{prop}[(Strict) positive definiteness of $\kde$]
	\label{th_strict_k}
	Let us consider here again the notation of Proposition~\ref{prop1} and consider furthermore the class of kernels $\kde: (S,S') \in \Sfin \to 
	\kout \circ d_{\emb} (S,S')$ of Eq.~\ref{eq_proposed_k}, where $\kout:[0,\infty) \to \R$ is chosen such that $(h,h')\in \mathcal{H}^2 \to \kout(||h-h'||_{\mathcal{H}})$ is positive definite for any Hilbert space $(\mathcal{H}, \langle \cdot, \cdot, \rangle_{\mathcal{H}})$. Then, 
	\begin{description}
		\item[a)] $\kde$ is positive definite on $\Sfin$. 
		\item[b)] If furthermore $\kin$ satisfies $\textbf{i)}$ of condition \textbf{b)} in Proposition~\ref{prop1}, and $\kout: [0,\infty) \to \R$ is chosen such that $(h,h')\in \mathcal{H}^2 \to \kout(||h-h'||_{\mathcal{H}})$ is strictly positive definite for any Hilbert space $(\mathcal{H}, \langle \cdot, \cdot, \rangle_{\mathcal{H}})$, then $\kde$ is strictly positive definite on $\Sfin$.
	\end{description}
\end{prop}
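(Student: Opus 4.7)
The plan is to reduce the problem directly to the Hilbert space setting via the embedding $\emb$, and then leverage the hypotheses on $\kout$. The crucial observation, already supplied by Proposition~\ref{prop1}a), is that $\kds(S,S') = \langle \emb(S), \emb(S')\rangle_{\Hkin}$, so the canonical distance rewrites as $d_{\emb}(S,S') = \norm{\emb(S) - \emb(S')}_{\Hkin}$. Consequently, for any $S,S' \in \Sfin$,
\begin{equation*}
\kde(S,S') = \kout\bigl(\norm{\emb(S) - \emb(S')}_{\Hkin}\bigr),
\end{equation*}
i.e.\ $\kde$ is nothing but the pullback through $\emb$ of the kernel $(h,h')\mapsto \kout(\norm{h-h'}_{\Hkin})$ defined on the Hilbert space $\Hkin$. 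Both parts will then be immediate from this factorization.

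For part \textbf{a)}, I would fix $n \geq 1$, $S_1,\ldots,S_n \in \Sfin$ and $a_1,\ldots,a_n \in \R$, and write
\begin{equation*}
\sum_{i=1}^{n}\sum_{j=1}^{n} a_i a_j \, \kde(S_i,S_j)
= \sum_{i=1}^{n}\sum_{j=1}^{n} a_i a_j \, \kout\bigl(\norm{\emb(S_i) - \emb(S_j)}_{\Hkin}\bigr),
\end{equation*}
which is non-negative by the assumption that $(h,h')\mapsto \kout(\norm{h-h'}_{\mathcal{H}})$ is positive definite on every Hilbert space $\mathcal{H}$, applied here to $\mathcal{H} = \Hkin$ with the points $\emb(S_1),\ldots,\emb(S_n) \in \Hkin$. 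This establishes positive definiteness of $\kde$ on $\Sfin$.

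For part \textbf{b)}, I would use Proposition~\ref{prop1}b) to infer from the hypothesis on $\kin$ that $\emb$ is injective. Therefore, if $S_1,\ldots,S_n$ are pairwise distinct elements of $\Sfin$, then $\emb(S_1),\ldots,\emb(S_n)$ are pairwise distinct in $\Hkin$. Applying the strict positive definiteness of $(h,h')\mapsto \kout(\norm{h-h'}_{\mathcal{H}})$ on $\mathcal{H} = \Hkin$ to these pairwise distinct points, one obtains that for every nonzero $(a_1,\ldots,a_n) \in \R^n$,
\begin{equation*}
\sum_{i=1}^{n}\sum_{j=1}^{n} a_i a_j \, \kout\bigl(\norm{\emb(S_i) - \emb(S_j)}_{\Hkin}\bigr) > 0,
\end{equation*}
which is exactly strict positive definiteness of $\kde$ on $\Sfin$.

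I do not expect any real technical obstacle here: the substance of the result has been packaged into Proposition~\ref{prop1} (identifying $\kds$ with an inner product on $\Hkin$, and characterizing injectivity of $\emb$) and into the assumption that radial kernels in Hilbert space preserve (strict) positive definiteness. The only point that warrants care is to state the argument over $\mathcal{H} = \Hkin$ rather than a generic Hilbert space, and to invoke Proposition~\ref{prop1}b) in the precise direction \textbf{i)} $\Rightarrow$ \textbf{ii)} in order to guarantee injectivity of $\emb$ and hence distinctness of the embedded points used in the strict inequality.
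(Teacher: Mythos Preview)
Your proposal is correct and follows essentially the same approach as the paper: both rewrite $\kde(S,S')=\kout(\norm{\emb(S)-\emb(S')}_{\Hkin})$, obtain \textbf{a)} by composing the radial kernel on the Hilbert space $\Hkin$ with the map $\emb$, and obtain \textbf{b)} by invoking the injectivity of $\emb$ from Proposition~\ref{prop1}\textbf{b)} (direction \textbf{i)}$\Rightarrow$\textbf{ii)}) together with the strict positive definiteness of $\kout$. Your write-up is slightly more explicit in spelling out the quadratic forms, but the argument is the same.
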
   

\begin{remark}
	As mentioned in \cite{bachoc2018gaussian3}, 
	continuous functions inducing strictly positive definite functions on any Hilbert space can be characterized following Schoenberg's works both in terms of completely monotone functions and of infinite mixtures of squared exponential kernels (See, e.g., \cite{Wendland2005}). 
\end{remark}

\subsection{Practicalities}
\label{prac}

In what follows and as in many practical situations, we consider ``inner'' (i.e., on $\bset$) kernels of the form $\kin(\xx,\xx')=\varin \rin(\xx,\xx')$, where $\varin >0$ and $\rin$ is a (strictly) positive definite kernel on $\bset$ taking the value $1$ on the diagonal and parametrized by some (vector-valued or other) hyperparameter $\psiin$. In such a case, denoting $\embr(S)=\frac{1}{\# S} \sum_{\xx\in S} \rin(\xx,\cdot)$ and $d_{\embr}$ the associated canonical distance, we immediately have that $\emb=\varin \embr$ and $d_{\emb}=\sigin d_{\embr}$. As a consequence, if $\kout(\cdot)$ writes $\varout \rout(\frac{\cdot}{\theta_{H}})$ for $\varout, \thetaout >0$ and $\rout(\cdot)$ defining a radial (strictly) positive definite kernel on any Hilbert space (possibly depending on some other hyperparameters ignored for simplicity) with $\rout(0)=1$, 
\begin{equation*}
\kde(S,S')=\varout \rout\left(\frac{\sigin}{\thetaout}d_{\embr}(S,S')\right),
\end{equation*} 
and it clearly appears that having both $\sigin$ and $\thetaout$ results in overparametrization of $\kde$. For this reason, we adopt the convention that $\sigin=1$, hence remaining with the hyperparameters $\varout$, $\thetaout$ and $\psiin$ to be fitted, possibly along with others such as trend and/or noise parameters. 
In our experiments, where noiseless settings and a constant trend are assumed, we appeal to Maximum Likelihood Estimation with concentration on the $\varout$ parameter and a genetic algorithm with derivatives \citep{mebane2011genetic}, in the flavour of the solution implemented in the DiceKriging R package \citep{roustant2012dicekriging}. 

In the numerical experiments presented next, the base set $\bset$ is assumed to be of the form $[0,1]^d$ (in our examples $d=2$), and we choose for $\rin$ an isotropic Gaussian correlation kernel solely parametrized by a ``range'' $\thetain$. 
As for $\rout$, while any kernel admissible in Hilbert space such as those of the Mat\'ern family would be suitable, we also choose here a Gaussian for simplicity, hence ending up with a triplet of covariance hyperparameters, namely $(\sigout,\thetaout, \thetain) \in (0,+\infty)^3$. 
As $\varout$ is taken care of by concentration (i.e. its optimal value for any given value of $\thetaout, \thetain$ can be analytically derived as a function of $\thetaout$ and $\thetain$), there remains to maximize the corresponding concentrated (a.k.a. profile) log-likelihood function with respect to $\thetaout$ and $\thetain$. 
For this purpose the analytical gradient of the concentrated log-likelihood with respect to these parameters has been calculated and implemented. Besides, parameter bounds need to be specified to the chosen optimization algorithm (i.e. \textit{genoud}, here) and while it seems natural to choose bounds in terms of $\sqrt{d}$, the diameter of the unit $d$-dimensional hypercube, for $\thetaout$ the adequate diameter is slightly less straightforward and calls for some analysis with respect to the range of variation of $d_{\embr}$ and how it depends on $\thetain$. The next proposition establishes simple yet practically quite useful results regarding the diameter of $\sset_r$ ($r>0$) with respect to $d_{\embr}$ and its maximal value when letting $\thetain$ vary.      
\begin{prop}
	\label{diameter_de}
	Let $\rin$ be an isotropic positive definite kernel on $\bset=[0,1]^d$ assumed to be monotonically decreasing to $0$ with respect to the Euclidean distance between elements of $\bset$, with range parameter $\thetain >0$. 
	Then the $d_{\embr}$-diameter of $\sset_p(\bset)$ $(p\geq 1)$, i.e. $\sup_{S,S' \in \sset_p}d_{\embr}(S,S')$, is reached with arguments $\{\mathbf{0}_d,\dots,\mathbf{0}_d\}$ and $\{\mathbf{1}_d,\dots,\mathbf{1}_d\}$, where $\mathbf{0}_d=(0,\dots,0), \mathbf{1}_d=(1,\dots,1) \in \bset$. Furthermore, the supremum of this diameter with respect to $\thetain \in (0,+\infty)$ is given by $\sqrt{2}$.  
\end{prop}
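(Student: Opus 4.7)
The plan is to express $d_{\embr}(S,S')^2$ as a squared norm in $\mathcal{H}_{\rin}$ and then apply the triangle inequality after a convenient pairing of elements between the two sets. Writing $S=\{\xx_1,\ldots,\xx_p\}$ and $S'=\{\yy_1,\ldots,\yy_p\}$, the difference decomposes as $\embr(S)-\embr(S')=\tfrac{1}{p}\sum_{i=1}^p \bigl(\rin(\xx_i,\cdot)-\rin(\yy_i,\cdot)\bigr)$, so the triangle inequality in $\mathcal{H}_{\rin}$ gives
\[
d_{\embr}(S,S') \;\leq\; \frac{1}{p} \sum_{i=1}^p \sqrt{2\bigl(1-\rin(\xx_i,\yy_i)\bigr)}.
\]

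For each summand I would invoke isotropy, writing $\rin(\xx,\yy)=\rho_{\thetain}(\lVert \xx-\yy\rVert)$ with $\rho_{\thetain}$ monotonically decreasing. Since $\lVert \xx_i-\yy_i\rVert\leq \sqrt{d}$ (the Euclidean diameter of $[0,1]^d$, attained between $\mathbf{0}_d$ and $\mathbf{1}_d$), monotonicity yields $\rin(\xx_i,\yy_i)\geq \rho_{\thetain}(\sqrt{d})=\rin(\mathbf{0}_d,\mathbf{1}_d)$ for all $i$, whence
\[
d_{\embr}(S,S') \;\leq\; \sqrt{2\bigl(1-\rin(\mathbf{0}_d,\mathbf{1}_d)\bigr)}.
\]
Equality is then reached by taking $S$ concentrated at $\mathbf{0}_d$ and $S'$ concentrated at $\mathbf{1}_d$: a direct expansion of $\kds$ gives $\kds(S,S)=\kds(S',S')=1$ and $\kds(S,S')=\rin(\mathbf{0}_d,\mathbf{1}_d)$, saturating the bound.

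For the supremum over $\thetain\in(0,+\infty)$, the uniform bound $d_{\embr}\leq \sqrt{2}$, which follows from $\rin\geq 0$ (itself a consequence of $\rho_{\thetain}$ being non-negative and decreasing to $0$), shows that $\sqrt{2}$ is an upper bound. Conversely, as $\thetain \to 0^+$ the kernel becomes arbitrarily short-ranged so $\rin(\mathbf{0}_d,\mathbf{1}_d)=\rho_{\thetain}(\sqrt{d})\to 0$, driving the diameter bound arbitrarily close to $\sqrt{2}$ and hence making the supremum equal to $\sqrt{2}$.

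The main subtlety I anticipate concerns the interpretation of the degenerate configurations $\{\mathbf{0}_d,\ldots,\mathbf{0}_d\}$ and $\{\mathbf{1}_d,\ldots,\mathbf{1}_d\}$ when $p\geq 2$ and $\sset_p(\bset)$ is understood as containing only sets of distinct elements: strictly speaking, the bound is then a supremum (not a maximum) approached by clustering $p$ distinct points around each corner, which is justified by the continuity of $\embr$ with respect to its arguments in $\mathcal{H}_{\rin}$. A secondary minor point is that the convergence $\rho_{\thetain}(\sqrt{d})\to 0$ as $\thetain\to 0^+$ uses, beyond monotonicity, the standard scaling by which $\thetain$ parametrizes the range (typically of the form $\rho_{\thetain}(t)=\phi(t/\thetain)$ with $\phi$ vanishing at infinity, which is the case for the Gaussian correlation kernel used in the experiments).
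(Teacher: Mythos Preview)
Your argument is correct and reaches the same bound $d_{\embr}(S,S')\leq\sqrt{2(1-\rin(\mathbf{0}_d,\mathbf{1}_d))}$ as the paper, but by a genuinely different route. The paper expands $d_{\embr}^2(S,S')$ directly as the three double sums in the definition, bounds the two self-similarity blocks by $p^2$ using $\rin\leq 1$, and bounds the cross block below by $p^2\,\rin(\mathbf{0}_d,\mathbf{1}_d)$ using monotonicity over all $p^2$ pairs $(\xx_i,\xx_j')$. You instead work at the level of the distance (not its square), pair up the elements of $S$ and $S'$ one-to-one, and apply the triangle inequality in $\mathcal{H}_{\rin}$ to reduce to $p$ singleton distances. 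Both approaches are short and yield the identical bound; the paper's version is pairing-free and handles all cross terms at once, while yours is perhaps more geometric and makes the equality case (all summands collinear and equal) immediately transparent. Your remark about the degenerate sets $\{\mathbf{0}_d,\ldots,\mathbf{0}_d\}$ not belonging to $\sset_p(\bset)$ for $p\geq 2$ is a valid observation that the paper's proof does not comment on; treating the diameter as a supremum approached by clustering is the right fix. The handling of the supremum over $\thetain$ is essentially the same in both proofs.
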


\section{Applications}

We now demonstrate the applicability of the class of DE kernels for both prediction and optimization purposes, with comparisons when applicable to similar methods based on DS kernels, 
and also to random search in the optimization case. In all examples, both inner and outer kernels  (resp. $\kin$ and $\kout$) are assumed Gaussian.
The three hyperparameters $(\sigout,\thetaout, \thetain)$ are estimated by Maximum Likelihood with concentration on $\sigout^2$, as detailed in Section~\ref{prac}. 
Three synthetic test functions and two application test cases are considered, respectively in mechanical engineering (CASTEM) and in hydrogeology (Contaminant source localization), all presented below. In the CASTEM case, the available data set consists of a fixed number ($404$) of (set input)-output instances, while in the other test cases one may boil down to a similar situation by restricting the scope to  finitely many such instances. 
Yet, the hydrogeology test case is the only one where $\bset$ is structurally restricted to remain finite, here a set of $25$ possible well locations, hence leading to a combinatorial optimization problem. 

\subsection{Presentation of Test Functions and Cases}

\subsubsection{Synthetic Functions}
Our three synthetic test functions consist of extensions of the Branin-Hoo test function \citep[See, e.g.,][]{roustant2012dicekriging}, denoted below by $g$, for set-valued inputs. These extensions are based respectively on the maximum (MAX), minimum (MIN), and mean (MEAN) of $g$ values associated with each of $p=10$ evaluation points in $\bset=[0,1]^2$, leading to    
\begin{equation}
f(S) = \max_{\xx\in S}g(\xx)
\label{eqn:maxfunc}
\end{equation}
\begin{equation}
f(S) = \min_{\xx\in S}g(\xx)
\label{eqn:minfunc}
\end{equation}
\begin{equation}
f(S) = \frac{1}{\#S}\sum_{\xx\in S}g(\xx),
\label{eqn:meanfunc}
\end{equation}
where $S \in \sset_p=([0,1]^2)^{10}$.  
Let us remark that by design, the $f$ of Eq.~\ref{eqn:meanfunc} is well-suited to be approximated using the double sum kernel of Eq.~\ref{def_dsk}.
Indeed, if $g$ is assumed to be a draw of a GP with kernel $\kin$, then $f$ is a draw of a GP with kernel $\frac{1}{\#S} \frac{1}{\#S'}\sum_{\xx\in S, \xx\in S'}\kin(\xx, \xx')$, as numerical results 
of Sections~\ref{predres} and~\ref{optres} do reflect.

\subsubsection{CASTEM Simulations}
The CASTEM dataset, inherited from \citep{ginsbourger2016design}, was originally generated from mechanical simulations performed using the Cast3m code \citep{Cast3Msoftware} to compute equivalent stress values on biphasic material subjected to uni-axial traction. 
The unit-square represents a matrix material containing 10 circular inclusions with identical radius of $R = 0.056419$.
The dataset consists of 404 point-sets along with their corresponding stress levels. Fig. \ref{fig:exampledata} illustrates two (set input)-output instances from it.
While the goal pursued in \citep{ginsbourger2016design} was rather in uncertainty propagation, we consider this data set here also from an optimization perspective.   
\begin{figure}[h!]
	\centerline{\includegraphics[scale=0.33]{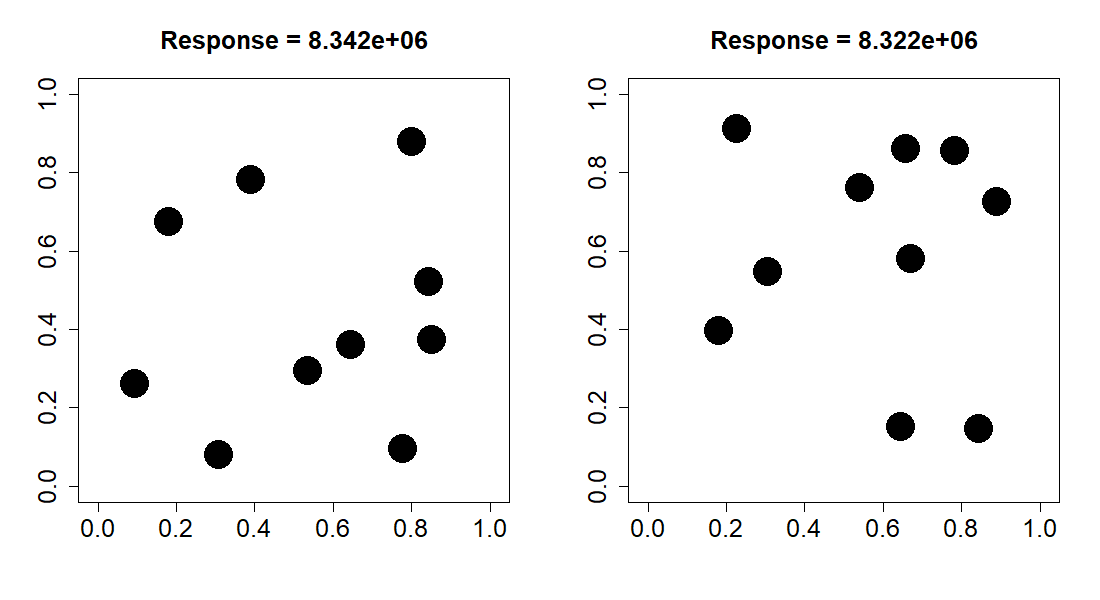}}
	\caption{Two CASTEM (set input)-output instances}
	\label{fig:exampledata}
\end{figure}

\subsubsection{Selection of Monitoring Wells for Contaminant Source Localization}

This test case relies on a benchmark generator of groundwater contaminant source localization problems from \citep{pirot2019contaminant}. 
The original problems consisted in finding among given candidate source localizations $\xx_i \in \R^2$ ($1\leq i \leq 2601$) which globally minimizes some measures of misfit between ``reference'' (or ``observed'') and ``simulated'' contaminant concentrations at fixed times and monitoring wells such as 
\begin{equation}
g(\xx,S)=\left(\sum_{i\in S}\sum_{t=1}^T|c_{\text{obs}}(i,t)-c_{\text{sim}}(\xx,i,t)|^2\right)^{\frac{1}{2}},
\end{equation} 
where $c_{\text{obs}}(i,t)$ is the reference concentration at well $i$ and time step $t$, $c_{\text{sim}}(\bold{x},i,t)$ is the corresponding simulated concentration when the source of contaminant is at $\xx$, and $S\subset S_{\text{full}}:=\bset=\{1,2, \dots, 25 \}$ is a given subset from $25$ fixed monitoring wells.  

Here, instead of fixing the subset of well locations $S$ and looking for the optimal $\xx$, we consider instead the maps of score discrepancies $g(\cdot, S_{\text{full}})-g(\cdot, S)$ as a function of $S$. From there, the considered combinatorial optimization problem consists in minimizing 
\begin{equation}
\label{eq:cont_obj}
f(S)=\sum_{i=1}^{2601}(g(\xx_i,S_{\text{full}})-g(\xx_i,S))^2
\end{equation}
over 
the set $\sset_p(\bset)$ of subsets of $p<25$ wells from $\bset$.
In the numerical experiments,  we fix $p=5$, and hence the cardinality of the considered set of subsets $\sset_5(\bset)$ is $\binom{25}{5} = 53,130$. To test the efficiency of our approach on this application, the two contaminant source locations (A and B) and two geological geometries of \citep{pirot2019contaminant} are considered, leading to four cases (denoted (Src A, Geo 1), (Src A, Geo 2), (Src B, Geo 1), (Src B, Geo 2), respectively). 

Since the base set $\bset=\{1,2, \dots, 25 \}$ is itself finite here, it follows from Prop.~\ref{th_nonstrict_dsk} that resulting double sum kernels are not strictly positive definite so that BO with those kernels fails after few iterations, as found in numerical experiments. 
Two subsets of five well locations are plotted in Fig.~\ref{fig:Contexampledata} along with contours of corresponding score discrepancy maps $g(\cdot, S_{\text{full}})-g(\cdot, S)$ and values of objective function $f$ derived from them. 

The first combination (left subfigure) better represents the misfit function $g(\cdot, S_{\text{full}})$ overall with a lower $f$ value. This 
subset is in fact the optimal one, obtained by exhaustive search over all $53,130$ candidates.
Our goal is precisely to efficiently locate by BO these optimal well locations whose contributions minimize the spatial sum of score discrepancies. 
%
The reader is referred to \citep{pirot2019contaminant} for further details and visualization of the misfit function, location of the contaminant source, and coordinates of well locations.

\begin{figure*}[t]
	\centering
	\begin{minipage}{0.5\textwidth}
		\includegraphics[width=1\textwidth]{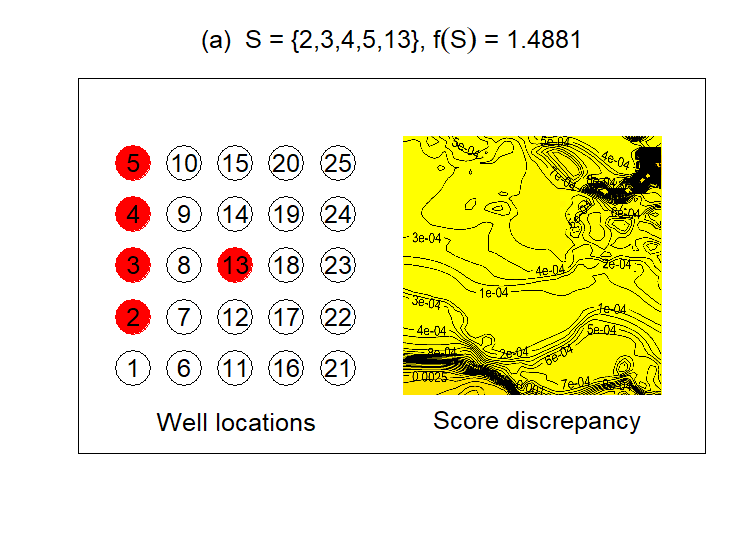}
	\end{minipage}%
	\begin{minipage}{0.5\textwidth}
		\includegraphics[width=1\textwidth]{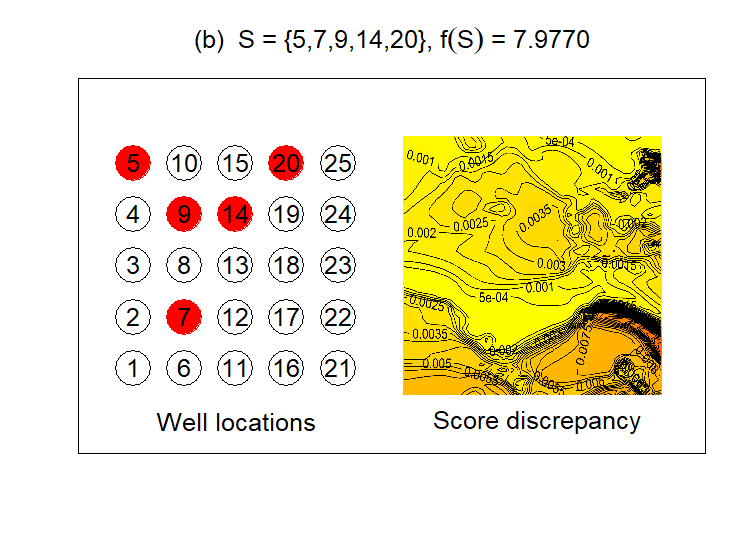}
	\end{minipage}%
	\caption{Score discrepancy map: location of selected wells (input $S$), score discrepancy landscape, and the spatial sum of score discrepancy objective function value $f(S)$.}
	\label{fig:Contexampledata}
\end{figure*}

\subsection{Prediction: Settings and Results}
\label{predres}

To assess the predictive ability of the considered GP models under the considered settings of data sets split into learning and test parts, we appeal to the so-called $Q^2$ or ``predictive coefficient'' \citep{Marrel.etal2008}, 
\begin{equation}
Q^2 = 1-\frac{\sum_{i=1}^{n_{\text{test}}}(f(S_i^{\text{(test)}})-m_n(S_i^{\text{(test)}}))^2}{
	\sum_{i=1}^{n_{\text{test}}}(f(S_i^{\text{(test)}})-\bar{\textbf{f}})^2},
\label{eqn:Q2}
\end{equation}
where $n_{\text{test}}$ is the number of test point-sets, $f(S_i^{\text{(test)}})$ and $m_n(S_i^{\text{(test)}})$  are the actual response and the mean values predicted by the GP model, respectively. $\bar{\textbf{f}}$ is the mean of $f(S_i^{\text{(test)}})$'s. The closer to $1$ the value of $Q^2$, the more efficient the predictor is. In addition, we also look at visual diagnostics based on the comparison of standardized residuals (i.e. divided by GP prediction standard deviations) with the normal distribution, both in cross- and external validation. 

As a result of Prop. \ref{th_nonstrict_dsk}, the DS kernel is not readily applicable for the contaminant source localization test case, due to singularity issues with covariance matrices. One way around this is to add a small positive jitter to their diagonal \citep[inspired by][]{ranjan2011computationally}. This approach will be referred to hereafter as DS+j whenever it is used in place of the original DS. More detail on the procedure used for jitter tuning and additional results can be found in supplementary material.

The total size of datasets used to assess prediction performances for the three synthetic test problems, CASTEM, and the contamination applications are 1000, 404, and 200, respectively. Each dataset is further partitioned into training and testing sub-datasets with percentages (80:20), (50:50) and (20:80).  
Average $Q^2$ values over 20 replications are provided in Table \ref{Q2-table1}. First, we observe that $Q^2$ tends to increase with the proportion of the full data set used for training, except in one case with CASTEM.  We see that the proposed approach with the DE kernel gives higher value of $Q^2$ than that with the DS kernel on all problems except for the MEAN function. We hypothesize the latter to be due to the adequacy between the MEAN function's nature and the DS kernel, as remarked earlier.

Finally, Fig.~\ref{fig:cont2080} shows leave-one-out (left panel) and out-of-sample diagnostics (right panel) for the source localization application (Src A, Geo 1) with DE kernel. The results show relatively moderate departures from the normality assumptions. Complete residual analysis for all scenarios as well as for DS kernels (with jitter) can be found in supplementary material. 

\begin{table*}[t!]
	\centering
	\footnotesize
	\caption{$Q^2$ values for GP predictions on all test cases with DE versus DS kernels ($\kde$ versus $\kds$(+j))} \label{Q2-table1}
	\begin{tabular}{|l|c|c|c|c|c|c|}
		\hline
		\multirow{2}{*}{\textbf{Problem}} & \multicolumn{3}{c|}{$\kde$
		} & \multicolumn{3}{c|}{$\kds$}
		\\ \cline{2-7} 
		& \textbf{20:80} & \textbf{50:50} & \textbf{80:20} & \textbf{20:80} & \textbf{50:50} & \textbf{80:20} \\ \hline
		\textbf{(a) MAX} & 0.6926 & 0.8011 & 0.8559 & 0.5644 & 0.7429 & 0.7725 \\ \hline
		\textbf{(b) MEAN} & 0.9996 & 0.9999 & $\sim$1 & $\sim$1 & $\sim$1 & $\sim$1 \\ \hline
		\textbf{(c) MIN} & 0.3309 & 0.4582 & 0.4929 & 0.1080 & 0.2245 & 0.2749 \\ \hline
		\textbf{(d) CASTEM} & 0.5799 & 0.6641 & 0.6543 & 0.5067 & 0.5410 & 0.5056 \\ \hline
		\multirow{2}{*}{\textbf{Problem}} & \multicolumn{3}{c|}{$\kde$}
		& \multicolumn{3}{c|}{$\kds$+j}
		\\ \cline{2-7} 
		& \textbf{20:80} & \textbf{50:50} & \textbf{80:20} & \textbf{20:80} & \textbf{50:50} & \textbf{80:20} \\ \hline
		\textbf{(e) (Src A, Geo 1)} & 0.7607 & 0.9133 & 0.9352 & 0.7437 & 0.8445 & 0.8804 \\ \hline
		\textbf{(f) (Src A, Geo 2)} & 0.7239 & 0.8855 & 0.9240 & 0.7130 & 0.8485 & 0.8729 \\ \hline
		\textbf{(g) (Src B, Geo 1)} & 0.7977 & 0.9190 & 0.9447 & 0.7901 & 0.8746 & 0.8904 \\ \hline
		\textbf{(h) ()Src B, Geo 2)} & 0.8486 & 0.9151 & 0.9439 & 0.8389 & 0.8944 & 0.9252 \\ \hline
	\end{tabular}
\end{table*}

\begin{figure*}[t]
	\centering
	\begin{minipage}{0.5\textwidth}
		\includegraphics[width=1\textwidth]{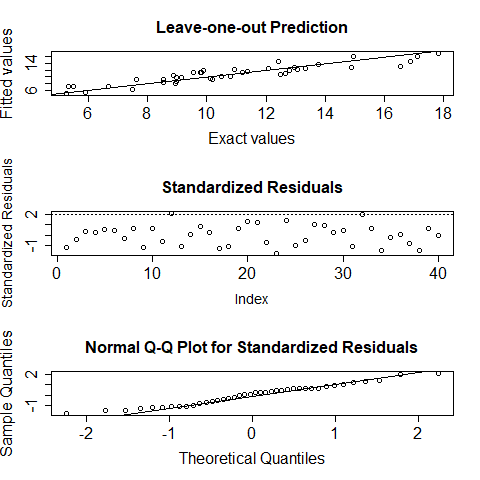}
	\end{minipage}
	\begin{minipage}{0.5\textwidth}
		\includegraphics[width=1\textwidth]{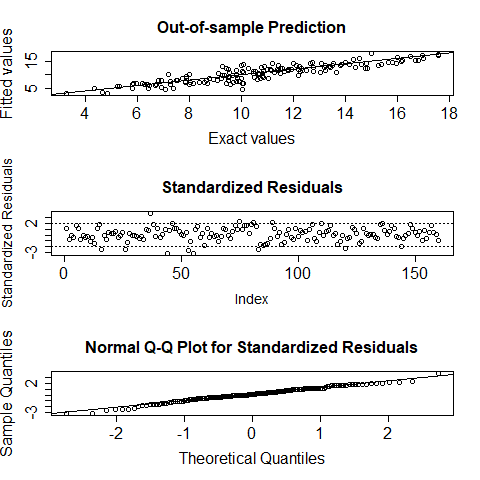}
	\end{minipage}
	\caption{GP prediction residual analysis on the contaminant source localization problem (Src A, Geo 1) with kernel $\kde$ and ratio (20:80). (a) Internal errors (left); (b) External errors (right).} \label{fig:cont2080}
\end{figure*}

\subsection{Optimization: Settings and Results}
\label{optres}

In this section, the efficiency of DE versus DS kernels (possibly with jitter) are evaluated within the BO framework, using the Expected Improvement (EI) \citep{mockus1978application} as infill sampling criterion. 
To assess optimization performances, the same datasets as those used in previous section are used for the three synthetic problems and CASTEM. 
As for the contaminant source application, the whole dataset of size $53,130$ is employed. Optimization performances are assessed on 50 repetitions of EI algorithms with $10$ initial design point-sets. For each repetition, all algorithms start with the same initial design, and are allocated $40$ additional objective function evaluations. The hyperparameters are iteratively re-determined in every iteration using MLE (See Section~\ref{prac} and supplementary material). 

Concerning EI maximization, EI values are computed at all point-sets and the one attaining the highest value is selected (no ties occurred). The performance is measured by (1) counting the number of trials (out of 50) for which the algorithm could find the best point from the considered dataset; and (2) monitoring the distribution of best found responses over iterations.  A random sampling method is used as baseline. Table \ref{BO-table} summarizes the number of trials that the minimum is found and Fig. \ref{BO-results} represents progress curves in terms of median and 95th percentile values of current best objective function values over 50 trials. 

\begin{table*}[t!]
	\footnotesize
	\caption{Numbers of trials (out of 50) for which the minimum is found for EI algorithms based on GP models with DE versus DS kernels, as well as for Random Sampling.} \label{BO-table}
	\centering
	\begin{tabular}{|l|c|c|c|}
		\hline
		\textbf{Problem}
		& \textbf{EI-$\kde$} & \textbf{EI-$\kds$} & \textbf{RANDOM} \\ \hline
		\textbf{(a) MAX} & 36 & 8 & 6 \\ \hline
		\textbf{(b) MEAN} & 50 & 50 & 4 \\ \hline
		\textbf{(c) MIN} & 9 & 8 & 3 \\ \hline
		\textbf{(d) CASTEM} & 28 & 10 & 5 \\ \hline
		\textbf{Problem}
		& \textbf{EI-$\kde$} & \textbf{EI-$\kds$+j} & \textbf{RANDOM} \\ \hline
		\textbf{(e) (Src A, Geo 1)} & 50 & 48 & 0 \\ \hline
		\textbf{(f) (Src A, Geo 2)} & 34 & 25 & 0 \\ \hline
		\textbf{(g) (Src B, Geo 1)} & 50 & 47 & 0 \\ \hline
		\textbf{(h) (Src B, Geo 2)} & 43 & 44 & 0 \\ \hline
	\end{tabular}
\end{table*}

EI algorithms with any of the two considered kernel classes clearly appear here superior to random sampling. Experiments on synthetic problems show that within the two considered EI algorithm settings, DE kernels outperform DS ones on the MAX problem both in terms of the number of trials that the true minimum is found and of the final best responses. On the MEAN problem, though, while both approaches lead to locate the minimum for all 50 replications, DS kernels lead to a fewer number of iterations as anticipated due to adequacy between this kernel class and the test function.  EI algorithms with both kernels did not perform well on the MIN problem which may be explained by the fact that the underlying Branin-Hoo function has the large portion of the search space being quite flat. For the CASTEM dataset, EI-$\kde$ and  EI-$\kds$ methods could locate the minimum for 28 and 10 trials, respectively, against $5$ for random sampling. 

\begin{figure*}[h!]
	\centering
	
	\begin{minipage}{0.25\textwidth}
		\includegraphics[width=1\textwidth]{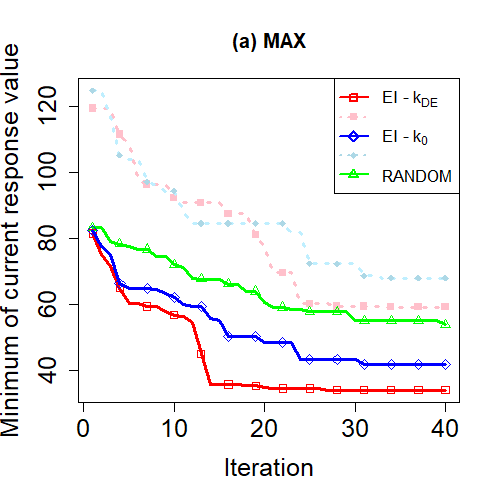}
	\end{minipage}%
	\begin{minipage}{0.25\textwidth}
		\includegraphics[width=1\textwidth]{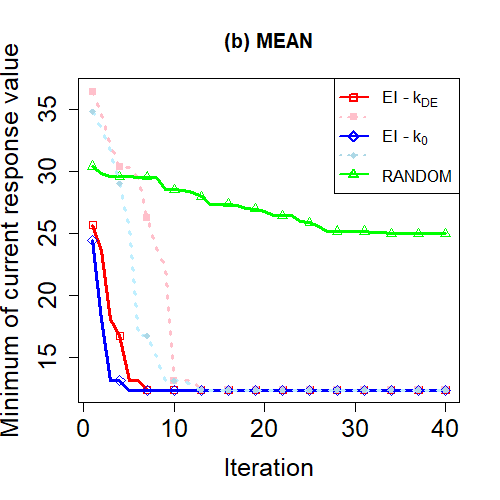}
	\end{minipage}%
	\begin{minipage}{0.25\textwidth}
		\includegraphics[width=1\textwidth]{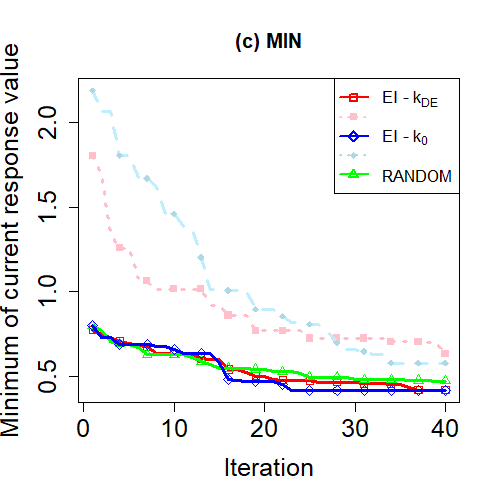}
	\end{minipage}%
	\begin{minipage}{0.25\textwidth}
		\includegraphics[width=1\textwidth]{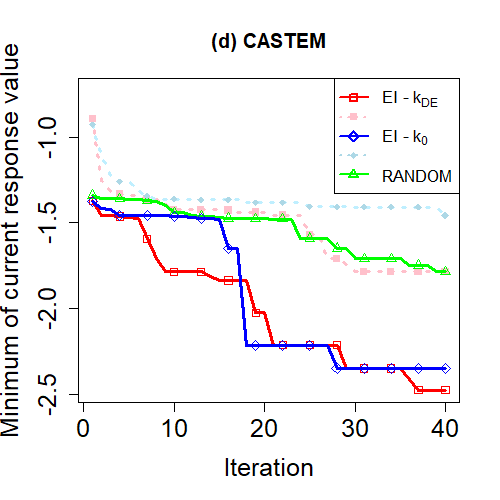}
	\end{minipage}%

	\begin{minipage}{0.25\textwidth}
		\includegraphics[width=1\textwidth]{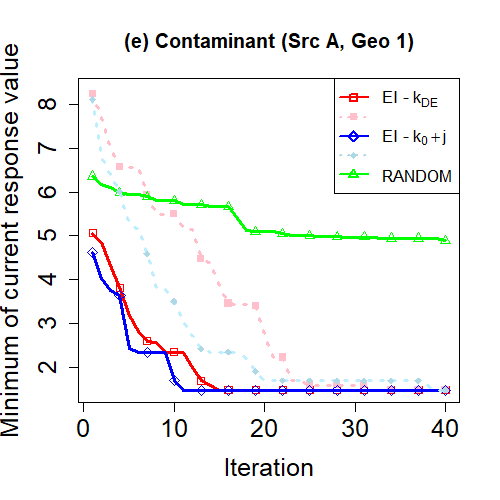}
	\end{minipage}%
	\begin{minipage}{0.25\textwidth}
		\includegraphics[width=1\textwidth]{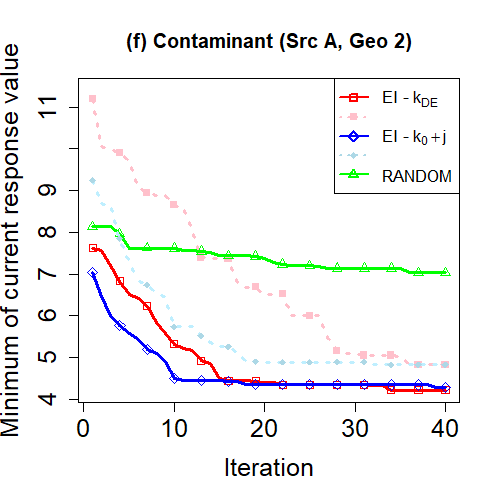}
	\end{minipage}%
	\begin{minipage}{0.25\textwidth}
		\includegraphics[width=1\textwidth]{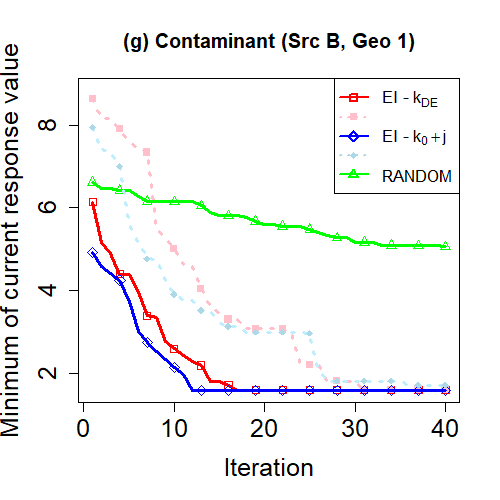}
	\end{minipage}%
	\begin{minipage}{0.25\textwidth}
		\includegraphics[width=1\textwidth]{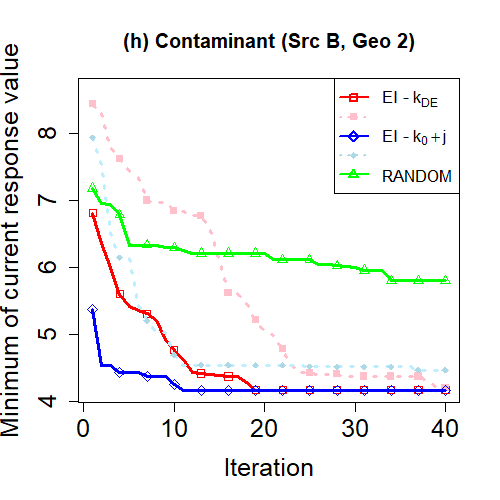}
	\end{minipage}%
	\caption{BO progress curves showing the median (solid lines) and 95th percentile (dotted lines) values of the current best response of problems (a) MAX, (b) MEAN, (c) MIN, (d) CASTEM, and contaminant problems (e) (Src A, Geo 1), (f) (Src A, Geo 2), (g) (Src B, Geo 1) and (h) (Src B, Geo 2).}
	\label{BO-results}
\end{figure*}

As for the source localization application, the obtained EI-$\kds$ results are all involving the use of a jitter. Overall, EI algorithms coupled with either of the two kernel classes appeared by far better than random sampling. Comparing performances between the two EI algorithms, EI-$\kde$ method could locate the global optimum more frequently (as indicated in Table \ref{BO-table}). In particular, with the DE kernel, the EI algorithm found the global optimum in every trial run on two scenarios of contaminant source localization problems (i.e. (Src A, Geo 1) and (Src B, Geo 1)).

The median progress curves (bottom panel of Fig. \ref{BO-results}) illustrate on the other hand that the DS kernel seem quite well-suited for the contaminant problems, as highlighted in particular by the fast initial decrease in best response value.  
The 95\% quantile curves suggest however that in the worst situations, EI-$\kde$ performs relatively better and seems to be more robust especially toward the end of the course when the jitter was needed to make EI-$\kds$ work. 
It is worth noting that determining an appropriate jitter level to add to the DS kernel is not a straightforward task. While one would want to add a smallest possible value of jitter, oftentimes, a too small jitter is not enough to fix conditioning issues. Additional results, with a large number of trials, revealing the effect of a poor choice of jitter level on DS kernel model's accuracy as well as optimization results are given in supplementary material. 
Overall, the strict positive definiteness of considered DE kernels (and the fact that no jitter is required) make them appear as a relatively robust option to efficiently address expensive combinatorial optimization problems in a ``black-box'' Bayesian Optimization framework (i.e., without requiring much prior knowledge about the problem structure).

\section{Discussion}

Experimental results obtained on the analytical objective functions and application test cases confirm the added value of the considered approaches for set-function prediction and (combinatorial) optimization. 

Yet a number of challenges and potential extensions remain to be addressed in future work. This includes computational difficulties that will arise when working with larger numbers of subsets and/or subset cardinalities, not only to handle bigger matrices but also to tackle the optimization of infill criteria. These criteria include the Expected Improvement as well as adaptations of further families of BO acquisition functions from frameworks such as Predictive Entropy Search \citep{Hern'andez-Lobato.etal2014}, Knowledge Gradient \citep{Frazier2018}, and others. 

From the test case perspective, future work may also include tackling further prediction and subset selection problems (be it in continuous or combinatorial settings, with problem structures of various levels of complexity), not only for optimization purposes but also with more general goals around uncertainty quantification and reduction \citep{Bect.etal}. Besides this, a nice feature of the considered approaches is that they would naturally extend to cases with varying subset cardinalities and also with ``marked'' point sets (in the vein of \citep{Cuturi.etal2005}'s molecular measures), hence accommodating applications such as CASTEM but with varying inclusion numbers and radii.  
Furthermore, the conceptual approach of chaining an embedding and a kernel in Hilbert space (also in the flavour of \citep{Christmann2010}) could apply to a variety of other input types provided that relevant mappings to Hilbert space can be found, opening the door to numerous non-conventional extensions of GP-based prediction, BO, and related kernel methods.

\subsubsection*{Acknowledgements}
The authors would like to thank the anonymous referees for constructive comments having lead to substantial improvements of the paper. 
P.B. would like to thank DPST scholarship project granted by IPST, Ministry of Education, Thailand for providing financial support during his master study. 
D.G.'s contributions have taken place within the Swiss National Science Foundation project number 178858. 
Furthermore, D.G. would like to thank colleagues including notably Fabrice Gamboa, Ath\'ena\"is Gautier, Luc Pronzato, Henry Wynn, and Anatoly Zhigljavsky for enriching discussions in recent years around ideas presented in this paper.
T.K. would like to acknowledge the support of Thailand Research Fund under Grant No.: MRG6080208, Centre of Excellence in Mathematics, CHE, Thailand, and the Faculty of Science, Mahidol University.
The authors would like to acknowledge the support of Idiap Research Institute. In particular, most numerical experiments presented here were run on Idiap's grid.   
The authors also thank Drs. Jean Baccou and Fr\'ed\'eric Perales 
(Institut de Radioprotection et de S\^uret\'e Nucl\'eaire,
Saint-Paul-l\`es-Durance, France) for the CASTEM data, and 
Dr. Cl\'ement Chevalier who has been involved in investigations on this data in the
framework of the ReDICE consortium.

\bibliographystyle{apalike}

\newpage
\renewcommand\appendixname{Supplementary Material}
\appendix 

\section{Elements of literature review}

Before reviewing some foundational machine learning papers dealing with kernels on sets of (sub)sets and related objects, let us start by some preliminary remarks on how an elementary class of positive definite kernels can be constructed in the context of measure spaces and why these kernels are not necessarily ideal for the prediction and optimization objectives we have in mind. Consider here a set $\bset$ equipped with a sigma-algebra $\mathcal{A}$ and a measure $\mu$, making it a measure space $(\bset, \mathcal{A}, \mu)$. Then it comes without much effort that the mapping $k$ defined by 
\begin{equation*}
k:(S,S') \in \mathcal{A}^2 \to \mu(S\cap S') \in [0,\infty)
\end{equation*}
constitutes a positive definite kernel. Indeed, taking arbitrary $n\geq 1$, $a_1,\dots,a_n \in \R$, $S_1,\dots,S_n \in \mathcal{A}$ and recalling that $\mu(S\cap S')=\int_{\bset} \mathbf{1}_{S}(\mathbf{u}) \mathbf{1}_{S'}(\mathbf{u}) \mathrm{d}\mu (\mathbf{u})$ , we do have 
\begin{equation*}
\begin{split}
\sum_{i=1}^n \sum_{j=1}^n a_i a_j k(S_i,S_j) 
&= \int_{\bset} \left( \sum_{i=1}^n a_i \mathbf{1}_{S_{i}}(\mathbf{u}) \right)^2 \mathrm{d}\mu (\mathbf{u})  \geq 0
\end{split}
\end{equation*}

In the particular case where $\bset$ is finite, $\mathcal{A}$ is the associated power set $\mathcal{P}(\bset)$, and $\mu$ is the counting measure, we find that 
\begin{equation*}
k(S,S')=\#(S\cap S')=\sum_{\xx\in S} \sum_{\xx'\in S} \frac{1}{2}\delta_{\xx,\xx'},
\end{equation*}
a kernel that does account for the position of points only to the extent that it counts the number of points simultaneously in both sets (without any account for the closeness of non-coinciding points). 
Such a kernel is referred to as \textit{default kernel on sets} in \citep[][Example 4.2]{Gaertner2004}, where it appears as a particular case of an abstract construction denoted \textit{default kernel for basic terms} (Definition 4.1, 
p. 213) and that is also applied for instance to multisets (Example 4.3 of the same page). For the case of the default kernel on sets, the authors comment following Example 4.2 that \textit{``the intuition here is that using the matching kernel for the elements of the set corresponds to computing the cardinality of the intersection of the two sets. Alternatively, this computation can be seen as the inner product of the bit-vectors representing the two sets''}. 

\bigskip 

Yet another important class of kernels for structured data, notably put to the fore by \cite{Gaertner2004} yet by pointing out high associated computational costs, is 
the class of \textit{convolution kernels} dating back to \cite{Haussler1999}. Convolution kernels can accommodate a variety of so-called ``composite structures'' by relying on their respective ``parts''. They are constructed based on prescribed kernels between vectors of parts by instantiating and summing them with respect to all vectors of parts generating the considered compositive structures (Theorem~1 in \cite{Haussler1999}). The proof of the latter theorem turns out to be based on the following Lemma that focuses on composite structures writing as finite subsets of a base set (say $\bset$, to stick to the notation of the present paper): 
\begin{prop}[Lemma~1 of \cite{Haussler1999}]
	Let $k$ be a kernel on $\bset \times \bset$ and for all finite, nonempty $A,B	\subseteq \bset$ define $k'(A,B)=\sum_{x\in A, y\in B}k(x,y)$. Then $k'$ is a kernel on the product of the set of all finite, nonempty, subsets of $\bset$ with itself. 	
\end{prop}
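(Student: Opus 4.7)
The plan is to mimic the argument behind part \textbf{a)} of Proposition~\ref{prop1}, recognizing $k'$ as essentially the unnormalized counterpart of the double sum kernel $\kds$ defined in Equation~\ref{def_dsk}. In fact, for a finite nonempty $A\subseteq \bset$ one has $k'(A,B) = (\#A)(\#B)\,\kds(A,B)$, so positive definiteness of $k'$ will follow along exactly the same lines as for $\kds$.

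First I would introduce the RKHS $\Hkin$ of $\kin := k$ together with its canonical feature map $\xx \mapsto \kin(\xx,\cdot)$, and define
\begin{equation*}
\Phi: A \mapsto \sum_{\xx\in A} \kin(\xx,\cdot) \in \Hkin,
\end{equation*}
which is well defined since $A$ is finite (so the sum converges trivially in $\Hkin$). By the reproducing property and bilinearity of the inner product,
\begin{equation*}
\langle \Phi(A), \Phi(B) \rangle_{\Hkin}
= \sum_{\xx\in A}\sum_{\yy\in B} \langle \kin(\xx,\cdot), \kin(\yy,\cdot) \rangle_{\Hkin}
= \sum_{\xx\in A}\sum_{\yy\in B} \kin(\xx,\yy)
= k'(A,B).
\end{equation*}
Thus $k'$ is the inner-product kernel in $\Hkin$ associated with the feature map $\Phi$.

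The positive definiteness of $k'$ then comes for free: for any $n\geq 1$, scalars $a_1,\dots,a_n \in \R$, and finite nonempty subsets $A_1,\dots,A_n \subseteq \bset$,
\begin{equation*}
\sum_{i=1}^n \sum_{j=1}^n a_i a_j\, k'(A_i,A_j)
= \left\lVert \sum_{i=1}^n a_i\, \Phi(A_i) \right\rVert_{\Hkin}^2 \geq 0.
\end{equation*}
There is no real obstacle here; the only subtle point worth noting is that the result does not require $\kin$ to be strictly positive definite, and conversely $k'$ is generally not strictly positive definite either (as one can see from Proposition~\ref{th_nonstrict_dsk} applied to the rescaled kernel $\kds$). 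A purely elementary alternative, avoiding RKHS machinery, would expand $\sum_{i,j} a_i a_j k'(A_i,A_j) = \sum_{i,j} \sum_{\xx\in A_i, \yy\in A_j} a_i a_j \kin(\xx,\yy)$ and regroup terms to recognize a nonnegative quadratic form associated with $\kin$ evaluated at the union $\cup_i A_i$ with suitably aggregated weights; but the RKHS viewpoint is by far the cleanest and also makes transparent the link with the paper's embedding $\emb$.
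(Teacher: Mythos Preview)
Your proof is correct and is essentially the same argument the paper uses. The paper does not give a separate proof of this cited lemma from \cite{Haussler1999}; it simply remarks that the construction coincides (up to the normalization by $\#A\#B$) with the double sum kernel $\kds$, whose positive definiteness is established in part~\textbf{a)} of Proposition~\ref{prop1} via exactly the RKHS inner-product identity $\kds(S,S')=\langle \emb(S),\emb(S')\rangle_{\Hkin}$ and the resulting norm-squared expansion --- precisely your feature-map argument with $\Phi$ in place of the normalized $\emb$.
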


Let us remark that this construction is none other than what we refer to as the double sum kernels throughout the paper, notably at the heart of \citep{Kim}. 

In contrast, the approach employed in \citep{Kondor2003} to create classes of kernels between sets consists in viewing these sets as samples from multivariate Gaussian distributions and then defining their baseline kernel in terms of  Bhattacharyya affinity between those distributions. The resulting approach is then further enriched or ``kernelized'' thanks to the introduction of a second kernel defined between elementary vectors. 
In \cite{Cuturi.etal2005}, the focus is on kernels on measures characterized by the fact that the value of the kernel between two measures is a function of their sum, and the proposed constructions rely on common quantities defined on measures such as entropy or generalized variance.
Quoting the article, \textit{``the considered kernels can be used to derive kernels on structured objects, such as images and texts, by representing these objects as sets of components, such as pixels or words, or more generally as measures on the space of components''}.
Here again, given an other kernel on the space of components itself, the approach is further extended using the ``kernel trick''. 

\bigskip 

\cite{Christmann2010} investigate universal kernels on non-standard input spaces.
They consider in particular a kernel on the set of probability measures obtained by chaining a radial Gaussian kernel and the RKHS distance between embedded distributions, coinciding in the case of uniform distributions over finite sets with our proposed class of \textit{Deep Embedding} kernels. They show that in case of a compact base space and with probability measures endowed with the topology of weak convergence, the kernels of interest are universal. 
The reader is also referred to  \citep{Berlinet.Thomas-Agnan2004, Smola2007, Sriperumbudur2011, Muandet2017} and references therein for more background results on RKHS embeddings of probability measures. Besides this, RKHS embeddings are also at the heart of the thesis \cite{Sutherland2016}, focusing on ``Scalable, Flexible and Active Learning on Distributions''. 
Kernel distribution embeddings have been recently further studied in \cite{Simon-Gabriel2018} from a functional analysis perspective, resulting in a proof that for kernels, being \textit{universal}, \textit{characteristic}, and \textit{strictly positive definite} (where the definitions are slightly extended) are essentially equivalent.  
The latter paper gives furthermore a complete characterization of kernels whose associated Maximum Mean Discrepancy distance metrizes weak convergence, and it is shown in turn that kernel mean embeddings can be extended from probability measures to Schwartz distributions. 

\section{Proofs of theoretical results}

\begin{proof}[Proof of Prop.~\ref{prop1}]
	\textbf{a)} $k(S,S')=\langle \emb(S), \emb(S') \rangle_{\Hkin}$ $(S,S'\in \Sfin)$ follows directly from scalar product bilinearity and $\langle \kin(\xx,\cdot), \kin(\xx',\cdot) \rangle_{\Hkin}=\kin(\xx,\xx')$  $(\xx,\xx'\in \bset)$, by reproducing property. Positive definiteness is then inherited from the scalar product as, for any $n\geq 1$, $a_1,\dots, a_n \in \R$ and $S_1,\dots, S_n \in \Sfin$, $\sum_{i=1}^n\sum_{j=1}^n a_i a_j k(S_i,S_j)=\left|\left| \sum_{i=1}^n a_i \emb(S_i) \right| \right|_{\Hkin}^2 \geq 0$. 
	Similarly, the non-negativity, symmetry, and triangle inequality for $d_{\emb}$ are inherited from the metric $\left|\left| \cdot \right| \right|_{\Hkin}$, making the former a pseudometric on $\Sfin$.
	\textbf{b)} First, \textbf{ii)} $\Leftrightarrow$ \textbf{iii)} as $d_{\emb}(S,S')=||\emb(S)-\emb(S')||_{\Hkin}$ and $\textbf{ii)}$ means that for $S\neq S'$ $\emb(S)\neq\emb(S')$, or equivalently $||\emb(S)-\emb(S')||_{\Hkin}\neq 0$ for $S\neq S'$, which is exactly what is needed for the pseudo-metric $d_{\emb}$ to qualify as a metric on $\Sfin$. 
	\textbf{i)} $\Rightarrow$ \textbf{ii)}: Let $S=\{\yy_1,\dots,\yy_{n_1}\}$ and $S'=\{\yy_1,\dots,\yy_{n_2}\}$ be distinct elements of $\Sfin$. Let us denote by $\ell \geq 0$ ($\ell \leq n_1+n_2$) the number of elements in $S\cap S'$ and denote $n=n_1+n_2-\ell$ and by $\xx_1, \dots, \xx_n$ the elements of $S\cup S'$ ordered so as to have as first $n_1-\ell$ elements those of $S\backslash S'$, then the $\ell$ elements from $S\cap S'$, and finally those of $S'\backslash S$ (the orders within those three categories being arbitrary). Denote further here $\mathbf{X}_n=(\xx_1, \dots, \xx_n)$.
	Then, 
	\begin{equation*}
	\label{eq_inj}
	\begin{split}
	&\emb(S)- \emb(S') 
	=\frac{1}{n_1} \sum_{i=1}^{n_1-\ell} \kin(\xx_i,\cdot)\\
	&+\left( \frac{1}{n_1} -\frac{1}{n_2}\right) \sum_{i=n_1-\ell+1}^{n_1} \kin(\xx_i,\cdot)
	+ \frac{1}{n_2} \sum_{i=n_1+1}^{n} \kin(\xx_i,\cdot),
	\end{split}
	\end{equation*}
	whereof, putting $a_i=\frac{1}{n_1} \ (1\leq i \leq n_1-\ell)$,
	$a_i=\frac{1}{n_1} -\frac{1}{n_2} \ (n_1-\ell+1\leq i \leq n_1)$,
	$a_i=\frac{1}{n_2} \ (n_1+1\leq i \leq n)$, and noting 
	$\kin(\mathbf{X}_n)=(\kin(\xx_i,\xx_j))_{i,j \in \{1,\dots,n\}}$, we have
	\begin{equation*}
	\label{eq_inj2}
	\begin{split}
	\left|\left| \emb(S)- \emb(S') \right| \right|_{H_{\kin}}=\sqrt{\mathbf{a}'\kin(\mathbf{X}_n)
		\mathbf{a}} >0
	\end{split}
	\end{equation*}
	where $\mathbf{a}=(a_1,\dots,a_n) \in A_n$ and the positivity follows from $\textbf{i)}$, implying that $\emb(S)\neq \emb(S')$ indeed. Assuming now that $\textbf{ii)}$ holds and considering elements $\xx_1, \dots, \xx_n \in \bset$ and $\mathbf{a}=(a_1,\dots,a_n) \in A_n$ such as in $\textbf{i)}$ (with $\ell, n_1,n_2$ following from $\mathbf{a}$), we define this time $S=\{\xx_1,\dots,\xx_{n_1+\ell}\}$ and $S'=\{\xx_{n_1+1},\dots,\xx_{n}\}$ and conclude that $\textbf{i)}$ holds by pointing out that $\sum_{i=1}^n \sum_{j=1}^n a_i a_j \kin(\xx_i,\xx_j) =\left|\left| \emb(S)- \emb(S') \right| \right|_{H_{\kin}} >0$, where $S\neq S'$ follows from the assumption of pairwise distinct $\xx_i$'s. 
\end{proof}

\begin{proof}[Proof of Prop.~\ref{th_nonstrict_dsk}]
	\textbf{a)} Putting $\kin(\mathbf{X}_c)=(\kin(\xx_i,\xx_j))_{i,j \in \{1,\dots,c\}}$ and \\ $u(S)=\frac{1}{\# S} (\mathbf{1}_{\xx_i \in S})_{1\leq i \leq c}$ in the right hand side directly delivers that 	
	\begin{equation*}
	u(S)^{T} \kin(\mathbf{X}_c) u(S)
	=\sum_{i=1}^c \sum_{j=1}^c \mathbf{1}_{\xx_i \in S}  \mathbf{1}_{\xx_j \in S'} \frac{\kin(\xx_i,\xx_j)}{\# S \# S'},
	\end{equation*}	
	which coincides indeed with Eq.~\ref{def_dsk}'s $\kds(S,S')$. Eq.~\ref{compact_dsk_matrix} then simply follows as a Gram matrix associated with the bilinear form defined by Eq.~\ref{compact_dsk}. 	
	\textbf{b)} That \textbf{i)} $\Rightarrow \overline{\mbox{\textbf{ii)}}}$ follows from the fact that if $\bset=\{\xx\}$ has cardinality $1$ and $\kin$ is strictly positive definite on $\bset$, then $\Sfin$ consists of the single element $\{\xx\}$, and $k(\{\xx\},\{\xx\})=\kin(\xx,\xx)>0$ whereof $k$ is strictly positive definite on $\Sfin$.  
	To prove that \textbf{ii)} $\Rightarrow \overline{\mbox{\textbf{i)}}}$, let us now consider the case where $\bset$'s cardinality is at least $2$ (finite or not). From this assumption, it is possible to choose two distinct elements in $\xx_A,\xx_B \in \bset$; let us denote here  $\mathbf{X}=\{\xx_A, \xx_B\}$, and set $S_1=\{\xx_A\}$, $S_2=\{\xx_B\}$, $S_{3}=\{\xx_A,\xx_B\}$, and $\mathbf{S}=(S_1,S_2,S_3)$. Following the same route as for  Eq.~\ref{compact_dsk_matrix}, we then get  
	\begin{equation*}
	\kds(\sets)=U(\sets)^{T}\kin(\mathbf{X}) U(\sets)
	=M(\sets)^{T} M(\sets),
	\end{equation*}
	with $M(\sets)=\kin(\mathbf{X})^{\frac{1}{2}}U(\sets)$. \\
	Hence $\rank(\kds(\sets)) \leq \rank(\kin(\mathbf{X})^{\frac{1}{2}})=\rank(\kin(\mathbf{X})) \leq 2$ and so the $3\times 3$ matrix $\rank(k(\sets))$ is non-invertible, proving indeed that $k$ is not strictly positive definite on $\Sfin$.   
\end{proof} 

\begin{remark}
	The first equation of point \textbf{a)} highlights the fact that even if $\kin(\mathbf{X})$ is a positive definite matrix (in particular, assuming that $\kin$ is strictly p.d. on $\bset$), the matrix $\kds(\sets)$ will actually be systematically singular for $q > c$. It turns out to also possibly happen in situations where $q \leq c$, as is for instance the case with $c=5, q=4$, and 
	$U(\sets) \propto 
	\left(
	\begin{array}{ccccc}
	1&1&0&0&1 \\
	0&0&1&1&1 \\
	1&0&0&1&1 \\
	0&1&1&0&1
	\end{array}
	\right)
	$.   
\end{remark}

\smallskip 

\begin{proof}[Proof of Prop.~\ref{th_strict_k}]
	Both points essentially rely on the fact that $d_\emb(S,S')=||\emb(S)-\emb(S')||_{\Hkin}$ and that, as Reproducing Kernel Hilbert Space, $\Hkin$ is in the first place a Hilbert space. Indeed, writing $\kde(S,S')=\kout(||\emb(S)-\emb(S')||_{\Hkin})$, we then directly obtain \textbf{a)} by composition of the positive definite kernel $(h,h')\in \mathcal{H}^2 \to \kout(||h-h'||_{\Hkin})$ with the mapping $\emb: \Sfin \mapsto \Hkin$. As for \textbf{b)}, assuming furthermore $\kout$ to be strictly positive definite on any Hilbert space and $\textbf{i)}$ of condition \textbf{b)} in Proposition~\ref{prop1} to hold, then the   
	strict positive definiteness of $\kde$ follows from the one of $\kout$ and the injectivity of $\emb$ ensured by Proposition~1.  
\end{proof}

\begin{proof}[Proof of Prop.~\ref{diameter_de}]
	Let us consider two sets $S=\{\xx_1,\dots, \xx_p\}, S'=\{\xx_1',\dots, \xx_p'\} \in \sset_p$. Then, from the fact that a correlation kernel is upper-bounded by $1$, we get
	\begin{align*}
	d_{\embr}^2(S,S') & = \frac{1}{p^2}\left(\sum_{i=1}^p\sum_{j=1}^p \rin(\xx_i,\xx_j)+\sum_{i=1}^p\sum_{j=1}^p \rin(\xx_i',\xx_j') \right.
	\\ 
	&\left. -2\sum_{i=1}^p\sum_{j=1}^p \rin(\xx_i,\xx_j')\right)\\
	& \leq \frac{1}{p^2}\left(2p^2 - 2\sum_{i=1}^p\sum_{j=1}^p \rin(\xx_i,\xx_j')\right)\\
	&\leq \frac{1}{p^2}\left(2p^2 - 2\sum_{i=1}^p\sum_{j=1}^p \rin(\mathbf{0}_d,\mathbf{1}_d)\right),
	\end{align*}
	where the last inequality follows from the assumed monotonicity of $\rin$ with respect to the Euclidean distance between elements of $\bset$ and the fact that the maximal distance between two points of $\bset$, i.e. the Euclidean diameter of $[0,1]^d$, is precisely attained for $\xx=\mathbf{0}_d$ and $\xx'=\mathbf{1}_d$. Finally, by assumption again, $\rin(\mathbf{0}_d,\mathbf{1}_d)$ is monotonically decreasing to $0$ when $\thetain$ decreases to $0$, and so the upper bound of $d_{\embr}^2$ tends to $\frac{1}{p^2}\left(2p^2 -0\right)=2$, showing that upper bound of the $d_{\embr}$-diameter of $\sset_p$ with respect to $\thetain \in (0,+\infty)$ is $\sqrt{2}$ indeed, independently of the dimension.  
\end{proof}

\section{Complements on the methodology}

\subsection{Maximum likelihood estimation for GPs with Deep Embedding kernel}
In the numerical experiments, we make predictions under a stationary GP model which assumes a constant unknown trend (following the route of Ordinary Kriging prediction such as exposed in \citep{roustant2012dicekriging}). When both $\kin$ and $\kout$ are assumed to be Gaussian kernels (still with the parametrization mentioned in \citep{roustant2012dicekriging}), the introduced Deep Embedding kernel takes the form
\begin{align}
k_{DE}(S,S') & = \kout \circ d_{\emb} (S,S')\notag\\
& = \sigout^2\rout\circ d_{\emb} (S,S')\label{kGG}\\
& = \sigout^2\exp\left(-\frac{1}{2}\frac{d^2_{\emb} (S,S')}{\thetaout^2}\right),
\end{align} 
where 
\begin{equation}
\begin{split}
d_{\emb}(S,S')& =\left(
\frac{1}{\#S\#S}\sum_{\xx_1,\xx_2\in S} \exp\left(-\frac{1}{2}\frac{\norm{\xx_1-\xx_2}^2}{\thetain^2}\right) \right.\\
&+
\frac{1}{\# S'\#S'}\sum_{\xx_1',\xx_2'\in S'} \exp\left(-\frac{1}{2}\frac{\norm{\xx'_1-\xx'_2}^2}{\thetain^2}\right)  \\ 
&-\left.
\frac{2}{\# S\# S'}\sum_{\xx\in S, \xx'\in S'}\exp\left(-\frac{1}{2}\frac{\norm{\xx-\xx'}^2}{\thetain^2}\right)
\right)^{\frac{1}{2}}.
\end{split}
\end{equation}

The three hyperparameters are determined by Maximum Likelihood Estimation (MLE). The expression of $k_{DE}$ as a function of $\rout$ in Equation \ref{kGG} allows us to use the concentrated log-likelihood, optimized with respect to $\thetaout$ and $\thetain$ via genetic algorithm with derivatives \citep{mebane2011genetic}. This can be done in a similar manner to the method given in Appendix A of \cite{roustant2012dicekriging}. Assuming positive values for the hyperparameters, the derivatives of $\rout(\cdot,\cdot)$ with respect to the two hyperparameters $\thetaout$ and $\thetain$ exist and are respectively given by:

\begin{equation}
\frac{\partial \rout(S,S')}{\partial \thetaout}  
= \rout(S,S')\left(\frac{d_{\emb} (S,S')^2}{\thetaout^3}\right),
\end{equation}
and
\begin{equation}
\frac{\partial \rout(S,S')}{\partial\thetain} 
= -\frac{1}{2\thetaout^2}\rout(S,S')\frac{\partial d_{\emb} (S,S')^2}{\partial \thetain},
\end{equation}
where
\begin{align}
\frac{\partial d_{\emb} (S,S')^2}{\partial \thetain} & = 
\frac{1}{\# S^2}\sum_{\xx_1,\xx_2\in S} \exp\left(-\frac{1}{2}\frac{\norm{\xx_1-\xx_2}^2}{\thetain^2}\right)\left(\frac{\norm{\xx_1-\xx_2}^2}{\thetain^3}\right)\notag
\\
&+\frac{1}{\# S'^2}\sum_{\xx_1',\xx_2'\in S'} \exp\left(-\frac{1}{2}\frac{\norm{\xx'_1-\xx'_2}^2}{\thetain^2}\right)\left(\frac{\norm{\xx'_1-\xx'_2}^2}{\thetain^3}\right)  \\
&-
\frac{2}{\# S\# S'}\sum_{\xx\in S, \xx'\in S'}\exp\left(-\frac{1}{2}\frac{\norm{\xx-\xx'}^2}{\thetain^2}\right)\left(\frac{\norm{\xx-\xx'}^2}{\thetain^3}\right)\notag
.
\end{align} 

\subsection{Condition number and jitter for matrix inversion}
The condition number of an $n\times n$ positive definite matrix $\textbf{R}$ under the $2$-norm is defined by
\begin{equation}
\kappa(\textbf{R}) = \norm{\textbf{R}}_2\norm{\textbf{R}^{-1}}_2 = \frac{\lambda_n}{\lambda_1},
\end{equation}
where $\lambda_n$ and $\lambda_1$ are the largest and smallest positive eigenvalues of $\textbf{R}$, respectively. A matrix is said to be ill-conditioned when its condition number is larger than some prescribed threshold. 

Given an ill-conditioned matrix, one can perturb the matrix by adding a small ``jitter" $\delta$ to diagonal in order to decrease its condition number:
\begin{equation}
\textbf{R}_\delta = \textbf{R} + \delta \textbf{I},
\end{equation}
where $\textbf{I}$ denotes the identity matrix with appropriate dimension. The eigenvalues of the perturbed matrix $\textbf{R}_\delta$ become $\lambda_i + \delta$, $i=1,2,3,...,n$ where $\lambda_i$ is the $i$th smallest eigenvalue of the original matrix $\textbf{R}$.

In Gaussian Process modelling, it is not rare that the inversion of ill-conditioned covariance/correlation matrices constitutes a bottleneck, motivating to introduce a positive jitter $\delta$; yet, finding an appropriate value for such a $\delta$ is no straightforward task and too small a value might not fix the issue of near singularity while too big a value could cause over-regularization and result in a poor surrogate of the inverse.
One  approach is to consider the jitter as a model hyperparameter and estimate it, e.g., by MLE. However, implementing this method may end up introducing positive jitter values even the matrix itself is well-conditioned. Also, things can be challenging from the computational point of view when $\delta$ takes a variety of values in the course of hyperparameter optimization. 

\cite{ranjan2011computationally} proposed an alternative way by finding a lower bound of the jitter that can overcome the ill-condition issue while minimizing the over-smoothing. As proven in \citep{ranjan2011computationally}, the condition number $\kappa(\textbf{R}_\delta)$, setting a jitter level to 
\begin{equation}
\delta\left(a\right) = \frac{\lambda_n\left(\kappa(\textbf{R})-\exp(a)\right)}{\kappa(\textbf{R})(\exp(a)-1)},
\label{deltalb}
\end{equation}
will ensure that the condition number of $\textbf{R}_\delta$ remains below a prescribed value $\exp(a)$. 

\section{Complementary experimental results}

\subsection{DS kernel +jitter for contaminant source localization test cases}
Due to conditioning issues in combinatorial problems, the double sum kernel is not readily applicable for the contaminant source localization test case. We hence apply the described jitter trick in the case of GP prediction with DS kernel on this test case. In particular, to find an appropriately small jitter, we vary the value of $``a" = 1,2,3,...,7$ in Equation \ref{deltalb}, and compare both prediction and optimization performances of the modified DS kernel when the corresponding bound values for the jitter are used. 

In the numerical experiments, once the jitter $\delta$ is set, the correlation matrix $\mathbf{R}_\delta = \mathbf{R}+\delta$ is used in all computations. This includes not only the computation of predictive mean and variance, but also the log-likelihood as well as its partial derivatives with respect to hyperparameters.

\subsubsection{Prediction performance}

Table \ref{tab:predcontwithdbs} gives $Q^2$ values for GP models with the proposed DE kernel against DS ones with multiple values of ``$a$" on the four considered scenarios for the contamination test case (refer to Table 1 in the main article).

We can see from the table that small values of ``$a$", e.g. $a= 1$ and $2$, which corresponds to larger jitter levels, yield higher prediction errors. Here in fact, the DE kernel outperforms the DS kernels on all cases. 

\begin{center}
	\begin{table}[h]
		\tiny
		\centering
		\caption{$Q^2$ values for GP predictions on contamination test cases with DE versus DS kernels ($\kde$ versus $\kds$+j)}
		\label{tab:predcontwithdbs}
		\begin{tabular}{|c|c|c|c|c|c|c|c|c|c|}
			\hline
			$Q^2$ & Ratio & $k_{DE}$ & $k_{0}+j1$ & $k_{0}+j2$ & $k_{0}+j3$ & $k_{0}+j4$ & $k_{0}+j5$ & $k_{0}+j6$ & $k_{0}+j7$ \\ \hline
			& 20:80 & 0.7607 & 0.3177 & 0.5756 & 0.7117 & 0.7501 & 0.7437 & 0.7109 & 0.6568 \\ \cline{2-10} 
			& 50:50 & 0.9133 & 0.3557 & 0.6506 & 0.7970 & 0.8391 & 0.8445 & 0.8438 & 0.8424 \\ \cline{2-10} 
			\multirow{-3}{*}{Src A, Geo 1} & 80:20 & 0.9352 & 0.4060 & 0.6930 & 0.8326 & 0.8728 & 0.8804 & 0.8815 & 0.8818 \\ \hline
			& 20:80 & 0.7239 & 0.2393 & 0.4884 & 0.6399 & 0.7013 & 0.7130 & 0.7025 & 0.6584 \\ \cline{2-10} 
			& 50:50 & 0.8855 & 0.3557 & 0.6430 & 0.8001 & 0.8449 & 0.8485 & 0.8476 & 0.8460 \\ \cline{2-10} 
			\multirow{-3}{*}{Src A, Geo 2} & 80:20 & 0.9240 & 0.3352 & 0.6514 & 0.8206 & 0.8673 & 0.8729 & 0.8724 & 0.8719 \\ \hline
			& 20:80 & 0.7977 & 0.2946 & 0.5457 & 0.7087 & 0.7775 & 0.7901 & 0.7720 & 0.7354 \\ \cline{2-10} 
			& 50:50 & 0.9190 & 0.3302 & 0.6450 & 0.8152 & 0.8668 & 0.8746 & 0.8749 & 0.8743 \\ \cline{2-10} 
			\multirow{-3}{*}{Src B, Geo 1} & 80:20 & 0.9447 & 0.3878 & 0.6847 & 0.8369 & 0.8818 & 0.8904 & 0.8916 & 0.8918 \\ \hline
			& 20:80 & 0.8486 & 0.2930 & 0.5672 & 0.7434 & 0.8182 & 0.8389 & 0.8398 & 0.8338 \\ \cline{2-10} 
			& 50:50 & 0.9151 & 0.3904 & 0.6916 & 0.8465 & 0.8880 & 0.8944 & 0.8946 & 0.8941 \\ \cline{2-10} 
			\multirow{-3}{*}{Src B, Geo 2} & 80:20 & 0.9439 & 0.4922 & 0.7543 & 0.8862 & 0.9207 & 0.9252 & 0.9259 & 0.9258 \\ \hline
		\end{tabular}
	\end{table}
\end{center}
Figures \ref{res-S1G1-2080}-\ref{res-S2G2-8020} show residual analyses for both leave-one-out and out-sample validation errors over four contaminant test cases. Here, we present only results for $\kds$+j2 and $\kds$+j5 (corresponding to the case when ``$a$"$=2$ and ``$a$"$=5$, respectively) to give a compact yet representative illustration of compared performances against the DE kernel. 

As one can see,  assigning an inappropriate ``$a$" value can lead to very poor predictive results ($a=2$). The fact that using the exposed approach with jitter heavily relies on the value of ``$a$" confers a relative robustness advantage to strictly positive definite DE kernels as no jitter is needed. This comes of course at the price of an additional hyperparameter to be fitted, yet with an estimation that can be more conveniently conducted together with the estimation of the other hyperparameters. 

\begin{figure*}[h!]
	\centering
	\flushleft{(a) $\kde$}
	\begin{minipage}{0.5\textwidth}
		\includegraphics[width=1\textwidth]{Figures/Cross-Validation-CONT-S1G1-PRED-RKHS-prop-2080-seed-1.png}
	\end{minipage}%
	\begin{minipage}{0.5\textwidth}
		\includegraphics[width=1\textwidth]{Figures/Prediction-CONT-S1G1-PRED-RKHS-prop-2080-seed-1.png}
	\end{minipage}
	\flushleft{(b) $\kds$+j2}
	\begin{minipage}{0.5\textwidth}
		\includegraphics[width=1\textwidth]{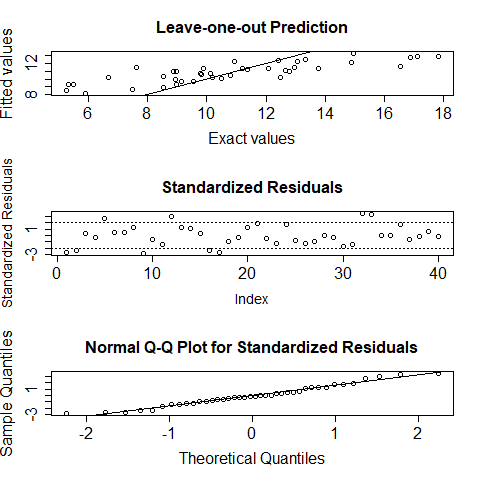}
	\end{minipage}%
	\begin{minipage}{0.5\textwidth}
		\includegraphics[width=1\textwidth]{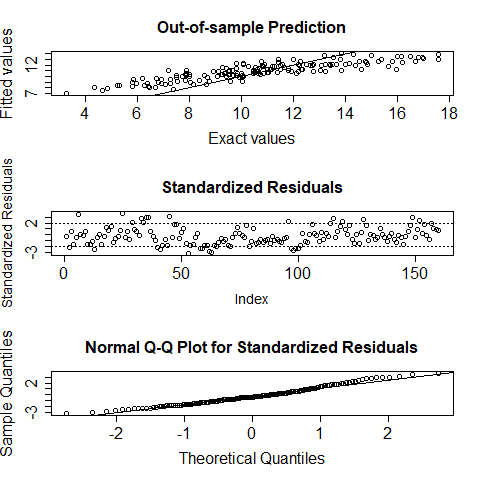}
	\end{minipage}
	\flushleft{(c) $\kds$+j5}
	\begin{minipage}{0.5\textwidth}
		\includegraphics[width=1\textwidth]{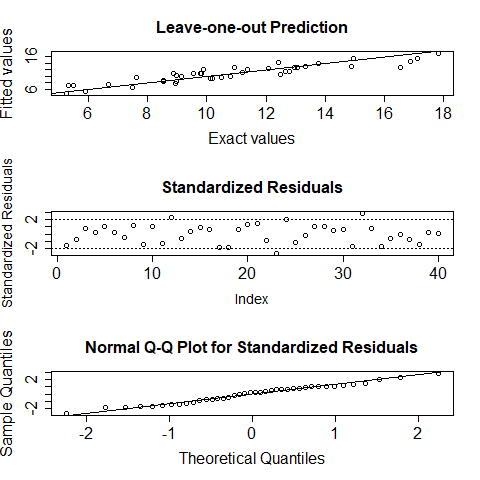}
	\end{minipage}%
	\begin{minipage}{0.5\textwidth}
		\includegraphics[width=1\textwidth]{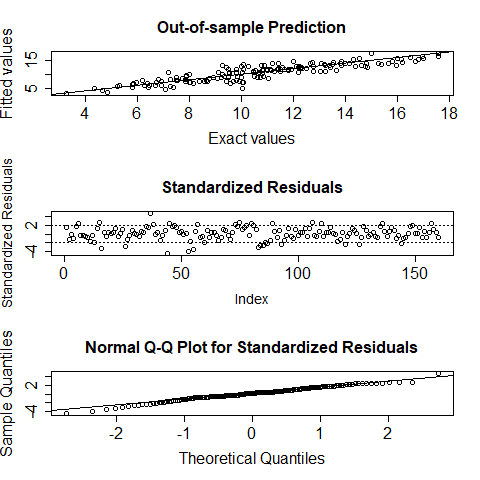}
	\end{minipage}
	\caption{Residual analysis on contamination test case \textbf{(Src A, Geo 1) with (20:80)}, (a) $\kde$, (b) $\kds$+j2 and (c) $\kds$+j5}
	\label{res-S1G1-2080}
\end{figure*}

\begin{figure*}[h!]
	\centering
	\flushleft{(a) $\kde$}
	\begin{minipage}{0.5\textwidth}
		\includegraphics[width=1\textwidth]{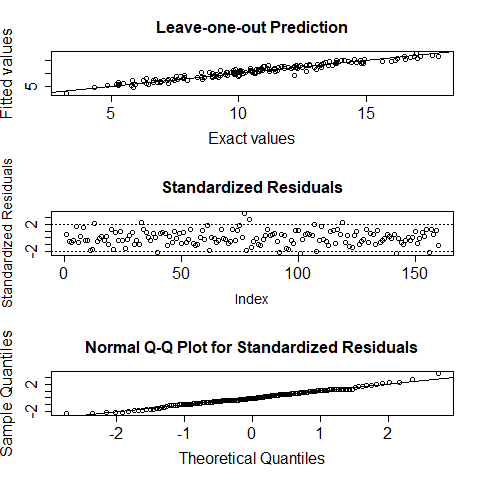}
	\end{minipage}%
	\begin{minipage}{0.5\textwidth}
		\includegraphics[width=1\textwidth]{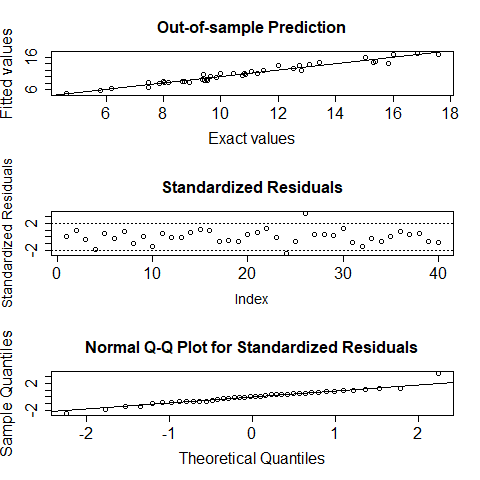}
	\end{minipage}
	\flushleft{(b) $\kds$+j2}
	\begin{minipage}{0.5\textwidth}
		\includegraphics[width=1\textwidth]{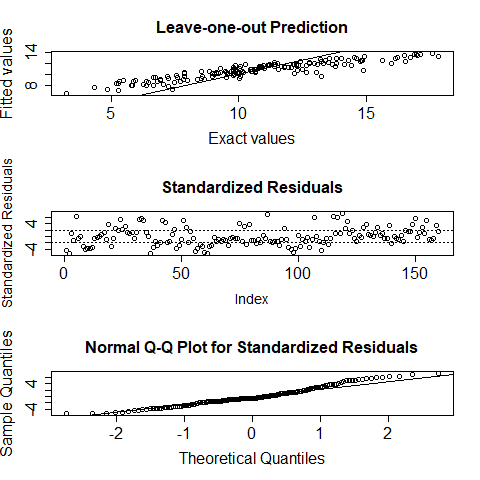}
	\end{minipage}%
	\begin{minipage}{0.5\textwidth}
		\includegraphics[width=1\textwidth]{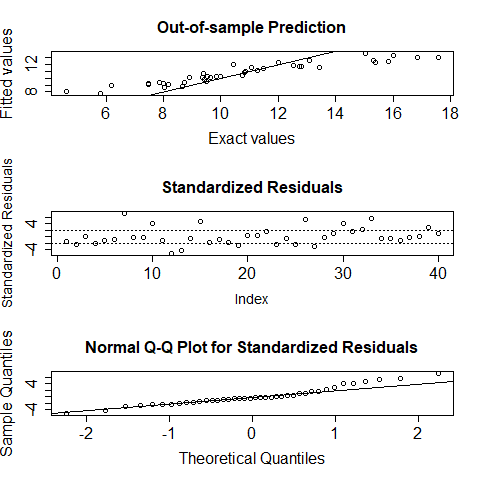}
	\end{minipage}
	\flushleft{(c) $\kds$+j5}
	\begin{minipage}{0.5\textwidth}
		\includegraphics[width=1\textwidth]{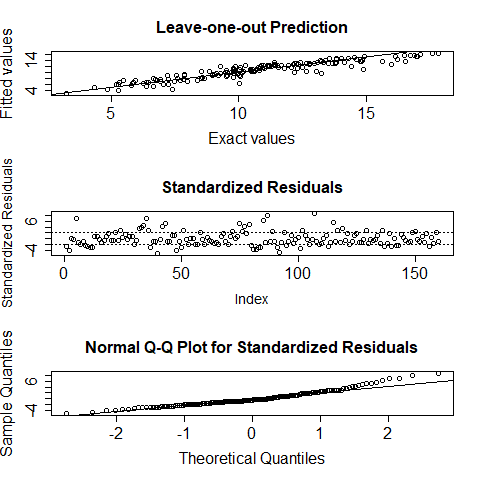}
	\end{minipage}%
	\begin{minipage}{0.5\textwidth}
		\includegraphics[width=1\textwidth]{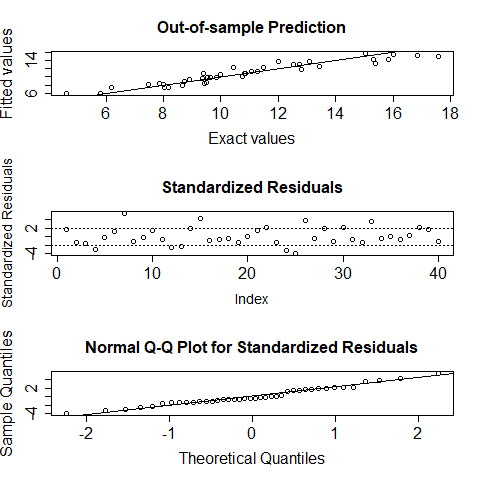}
	\end{minipage}
	\caption{Residual analysis on contamination test case \textbf{(Src A, Geo 1) with (80:20)}, (a) $\kde$, (b) $\kds$+j2 and (c) $\kds$+j5}
	\label{res-S1G1-8020}
\end{figure*}

\clearpage
\newpage

\begin{figure*}[h!]
	\centering
	\flushleft{(a) $\kde$}
	\begin{minipage}{0.5\textwidth}
		\includegraphics[width=1\textwidth]{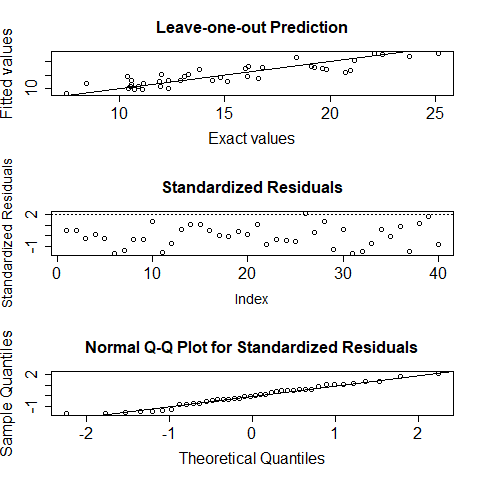}
	\end{minipage}%
	\begin{minipage}{0.5\textwidth}
		\includegraphics[width=1\textwidth]{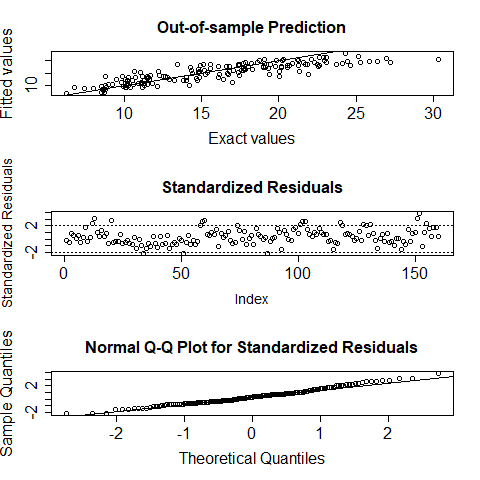}
	\end{minipage}
	\flushleft{(b) $\kds$+j2}
	\begin{minipage}{0.5\textwidth}
		\includegraphics[width=1\textwidth]{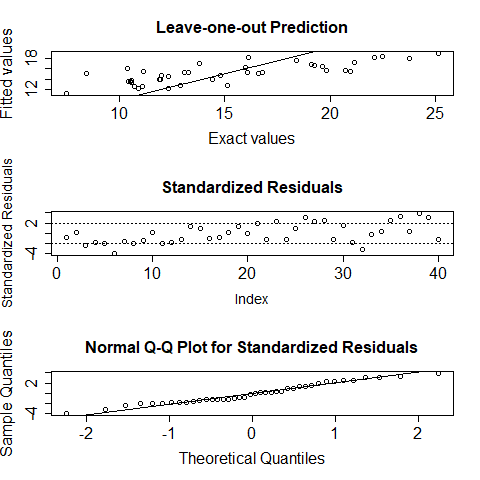}
	\end{minipage}%
	\begin{minipage}{0.5\textwidth}
		\includegraphics[width=1\textwidth]{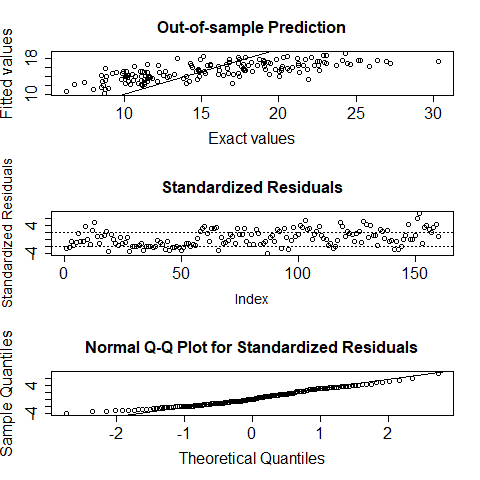}
	\end{minipage}
	\flushleft{(c) $\kds$+j5}
	\begin{minipage}{0.5\textwidth}
		\includegraphics[width=1\textwidth]{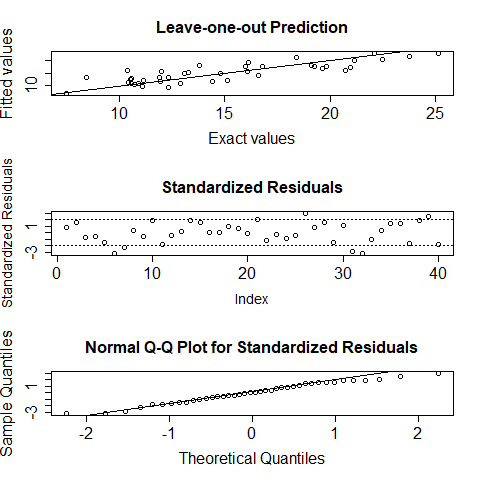}
	\end{minipage}%
	\begin{minipage}{0.5\textwidth}
		\includegraphics[width=1\textwidth]{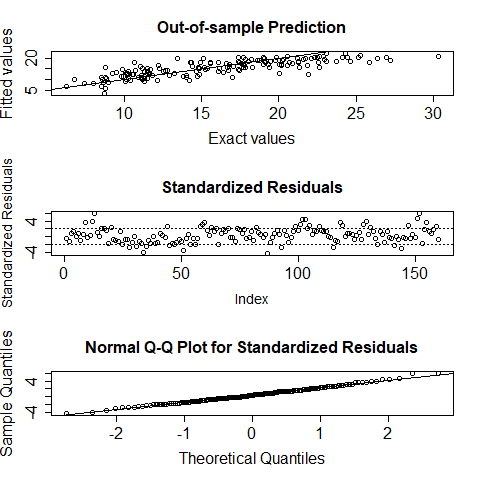}
	\end{minipage}
	\caption{Residual analysis on contamination test case \textbf{(Src A, Geo 2) with (20:80)}, (a) $\kde$, (b) $\kds$+j2 and (c) $\kds$+j5}
	\label{res-S1G2-2080}
\end{figure*}

\begin{figure*}[h!]
	\centering
	\flushleft{(a) $\kde$}
	\begin{minipage}{0.5\textwidth}
		\includegraphics[width=1\textwidth]{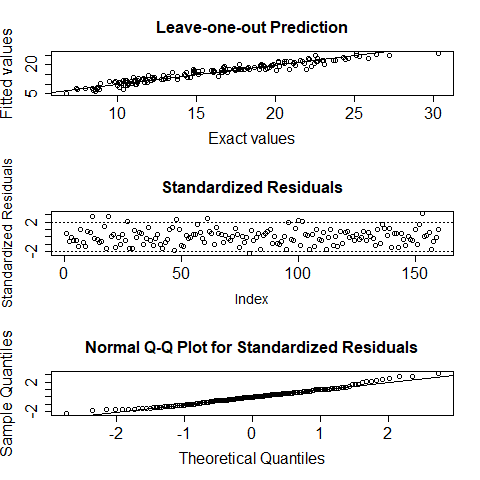}
	\end{minipage}%
	\begin{minipage}{0.5\textwidth}
		\includegraphics[width=1\textwidth]{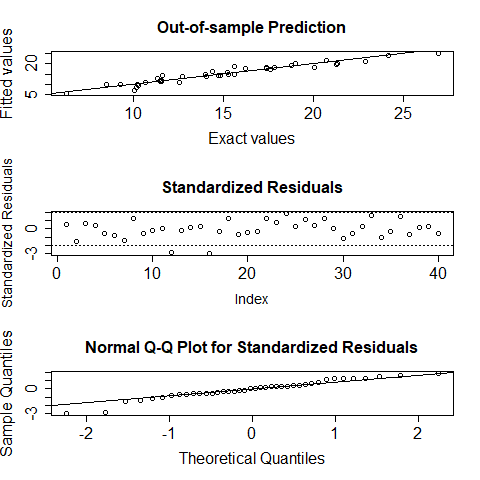}
	\end{minipage}
	\flushleft{(b) $\kds$+j2}
	\begin{minipage}{0.5\textwidth}
		\includegraphics[width=1\textwidth]{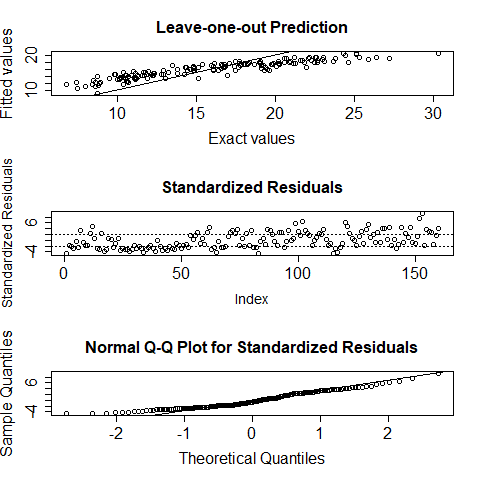}
	\end{minipage}%
	\begin{minipage}{0.5\textwidth}
		\includegraphics[width=1\textwidth]{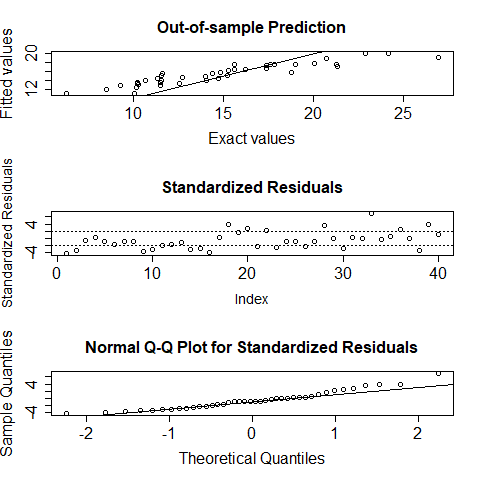}
	\end{minipage}
	\flushleft{(c) $\kds$+j5}
	\begin{minipage}{0.5\textwidth}
		\includegraphics[width=1\textwidth]{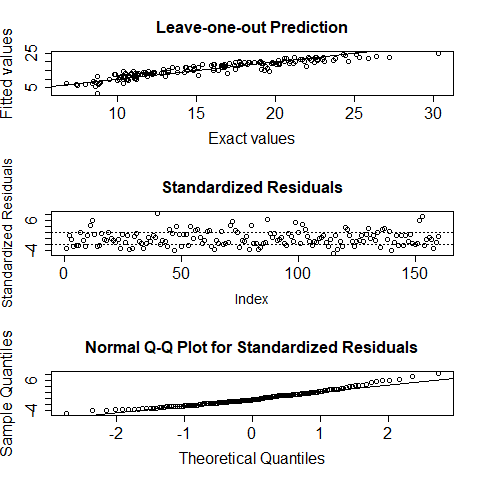}
	\end{minipage}%
	\begin{minipage}{0.5\textwidth}
		\includegraphics[width=1\textwidth]{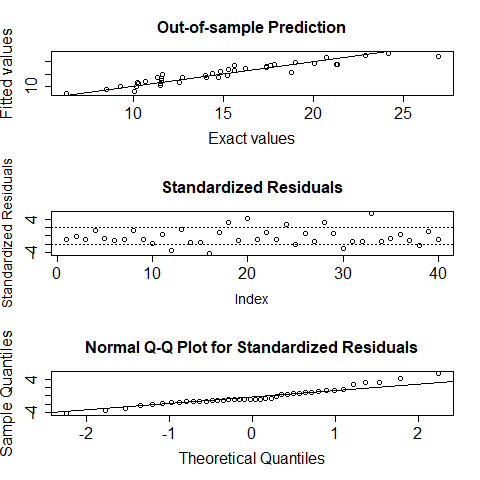}
	\end{minipage}
	\caption{Residual analysis on contamination test case \textbf{(Src A, Geo 2) with (80:20)}, (a) $\kde$, (b) $\kds$+j2 and (c) $\kds$+j5}
	\label{res-S1G2-8020}
\end{figure*}

\clearpage
\newpage

\begin{figure*}[h!]
	\centering
	\flushleft{(a) $\kde$}
	\begin{minipage}{0.5\textwidth}
		\includegraphics[width=1\textwidth]{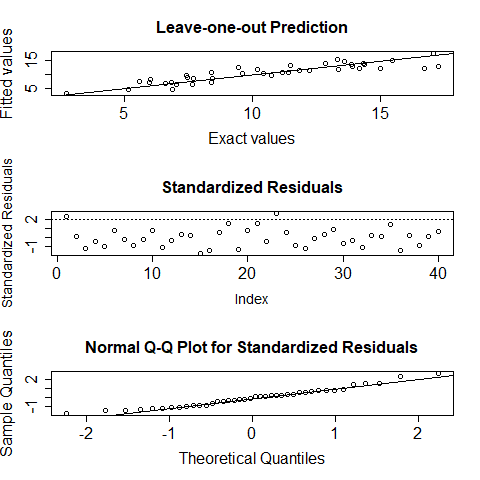}
	\end{minipage}%
	\begin{minipage}{0.5\textwidth}
		\includegraphics[width=1\textwidth]{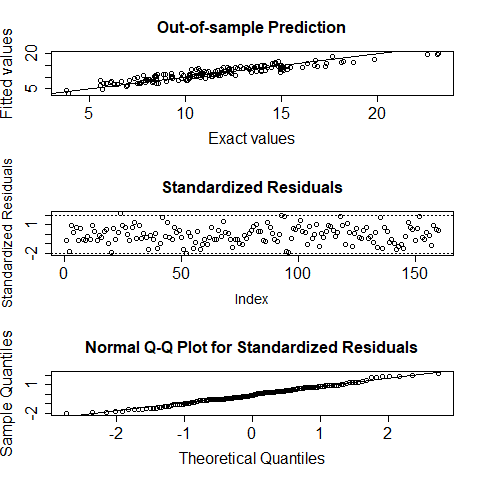}
	\end{minipage}
	\flushleft{(b) $\kds$+j2}
	\begin{minipage}{0.5\textwidth}
		\includegraphics[width=1\textwidth]{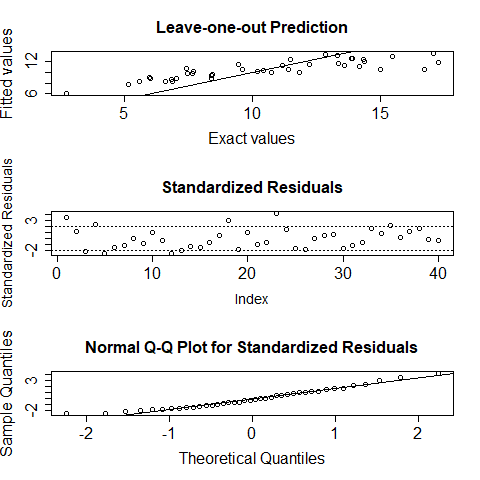}
	\end{minipage}%
	\begin{minipage}{0.5\textwidth}
		\includegraphics[width=1\textwidth]{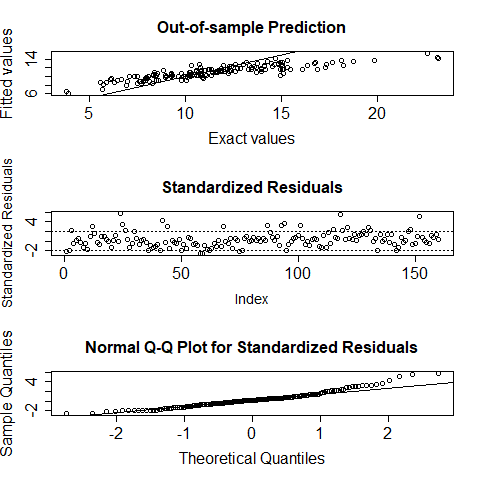}
	\end{minipage}
	\flushleft{(c) $\kds$+j5}
	\begin{minipage}{0.5\textwidth}
		\includegraphics[width=1\textwidth]{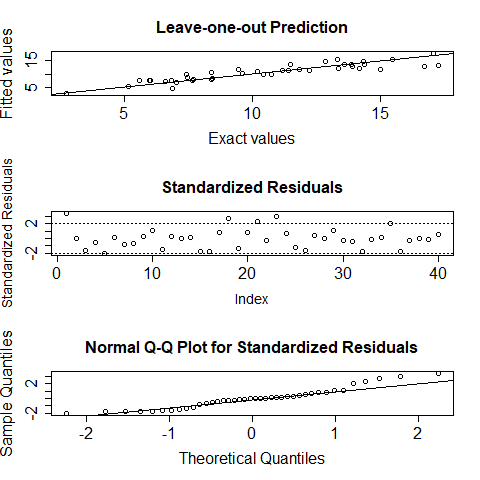}
	\end{minipage}%
	\begin{minipage}{0.5\textwidth}
		\includegraphics[width=1\textwidth]{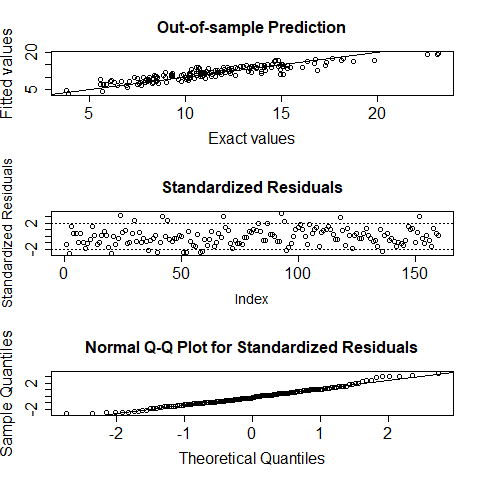}
	\end{minipage}
	\caption{Residual analysis on contamination test case \textbf{(Src B, Geo 1) with (20:80)}, (a) $\kde$, (b) $\kds$+j2 and (c) $\kds$+j5}
	\label{res-S2G1-2080}
\end{figure*}

\begin{figure*}[h!]
	\centering
	\flushleft{(a) $\kde$}
	\begin{minipage}{0.5\textwidth}
		\includegraphics[width=1\textwidth]{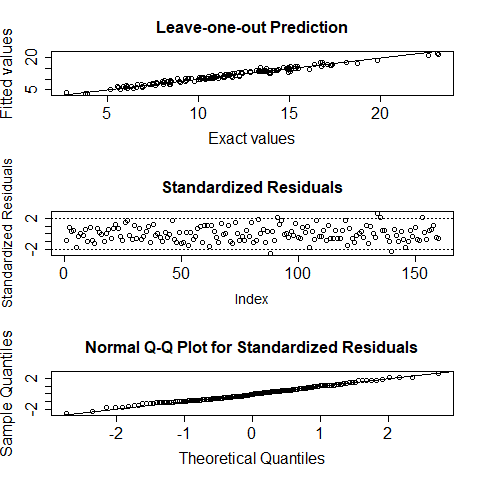}
	\end{minipage}%
	\begin{minipage}{0.5\textwidth}
		\includegraphics[width=1\textwidth]{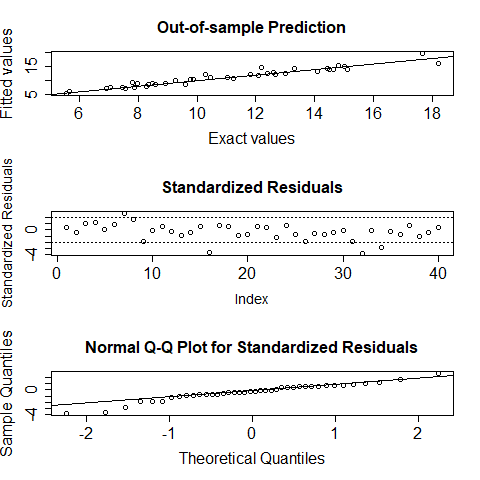}
	\end{minipage}
	\flushleft{(b) $\kds$+j2}
	\begin{minipage}{0.5\textwidth}
		\includegraphics[width=1\textwidth]{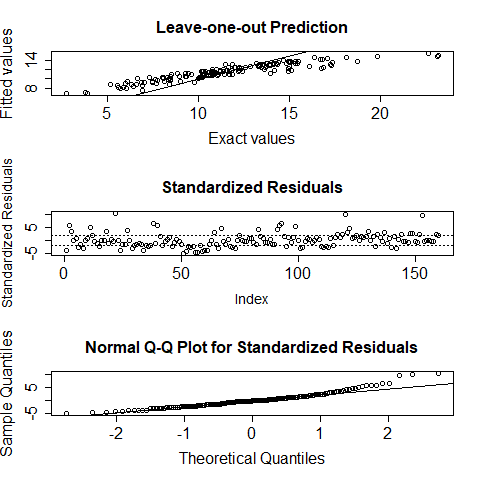}
	\end{minipage}%
	\begin{minipage}{0.5\textwidth}
		\includegraphics[width=1\textwidth]{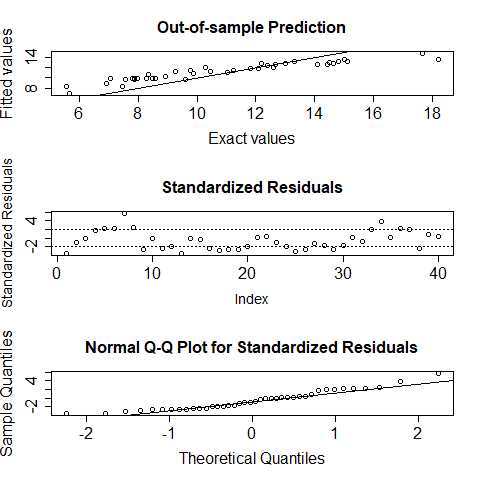}
	\end{minipage}
	\flushleft{(c) $\kds$+j5}
	\begin{minipage}{0.5\textwidth}
		\includegraphics[width=1\textwidth]{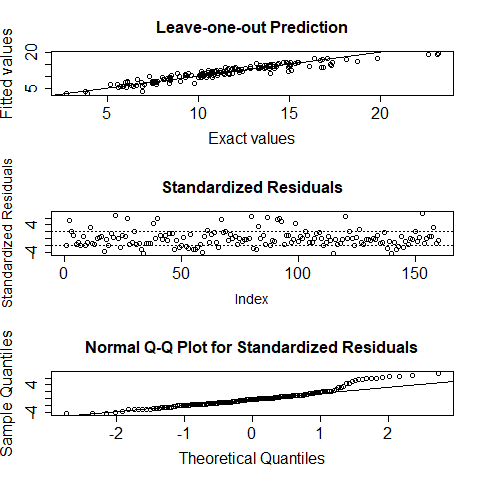}
	\end{minipage}%
	\begin{minipage}{0.5\textwidth}
		\includegraphics[width=1\textwidth]{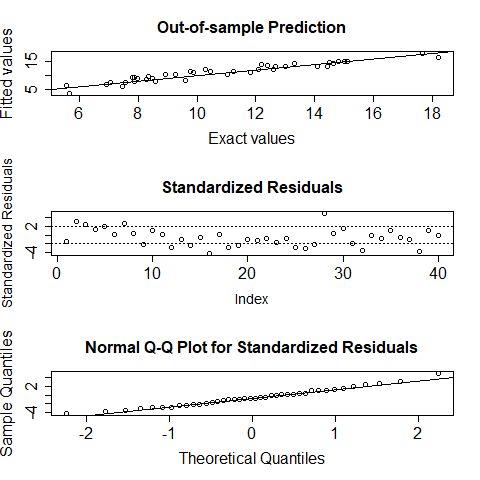}
	\end{minipage}
	\caption{Residual analysis on contamination test case \textbf{(Src B, Geo 1) with (80:20)}, (a) $\kde$, (b) $\kds$+j2 and (c) $\kds$+j5}
	\label{res-S2G1-8020}
\end{figure*}

\clearpage
\newpage

\begin{figure*}[h!]
	\centering
	\flushleft{(a) $\kde$}
	\begin{minipage}{0.5\textwidth}
		\includegraphics[width=1\textwidth]{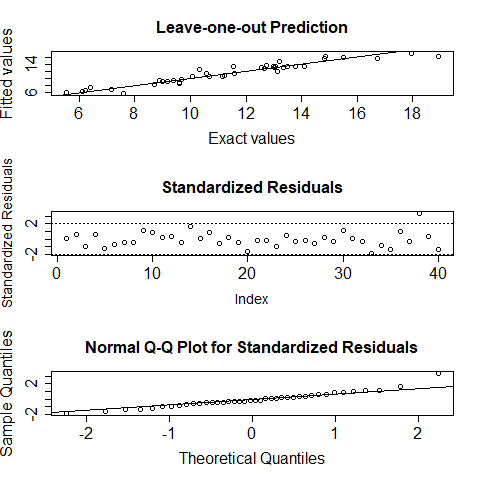}
	\end{minipage}%
	\begin{minipage}{0.5\textwidth}
		\includegraphics[width=1\textwidth]{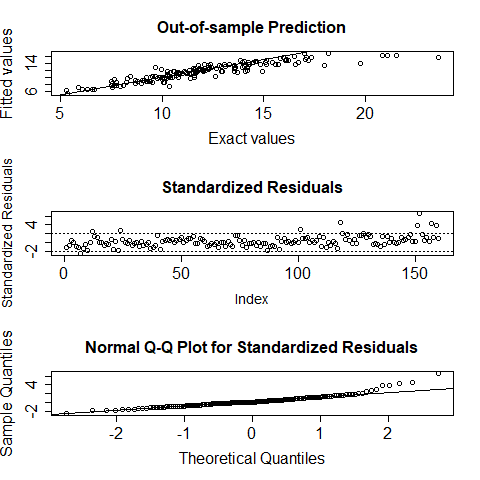}
	\end{minipage}
	\flushleft{(b) $\kds$+j2}
	\begin{minipage}{0.5\textwidth}
		\includegraphics[width=1\textwidth]{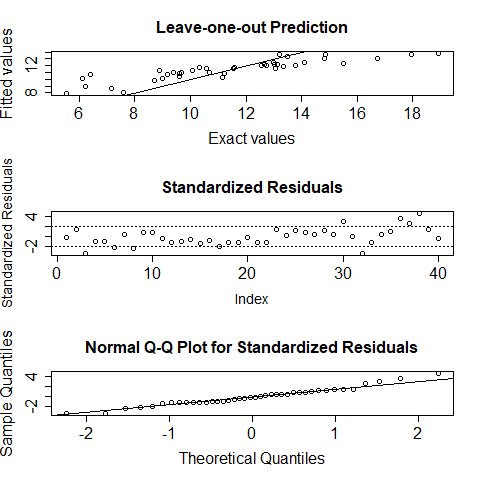}
	\end{minipage}%
	\begin{minipage}{0.5\textwidth}
		\includegraphics[width=1\textwidth]{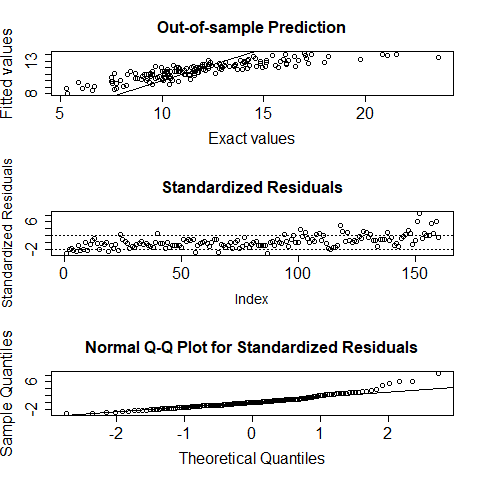}
	\end{minipage}
	\flushleft{(c) $\kds$+j5}
	\begin{minipage}{0.5\textwidth}
		\includegraphics[width=1\textwidth]{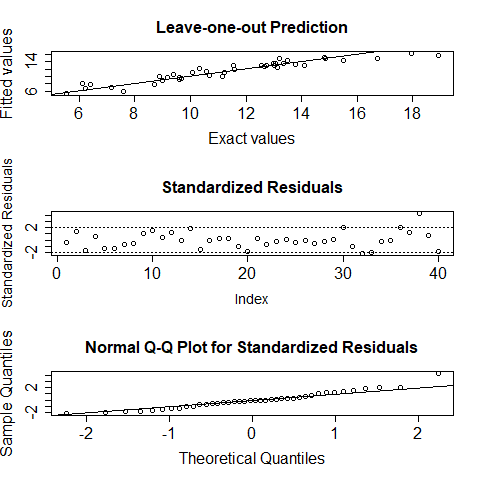}
	\end{minipage}%
	\begin{minipage}{0.5\textwidth}
		\includegraphics[width=1\textwidth]{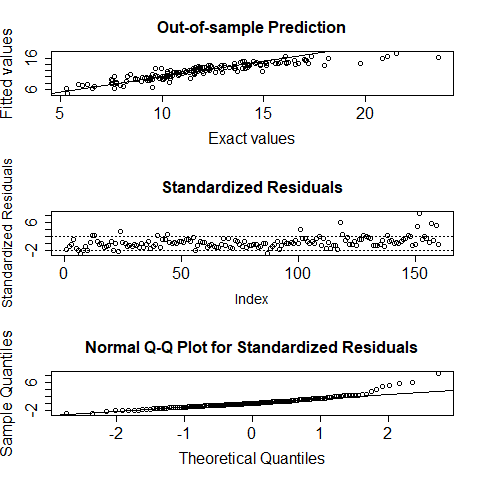}
	\end{minipage}
	\caption{Residual analysis on contamination test case \textbf{(Src B, Geo 2) with (20:80)}, (a) $\kde$, (b) $\kds$+j2 and (c) $\kds$+j5}
	\label{res-S2G2-2080}
\end{figure*}

\begin{figure*}[h!]
	\centering
	\flushleft{(a) $\kde$}
	\begin{minipage}{0.5\textwidth}
		\includegraphics[width=1\textwidth]{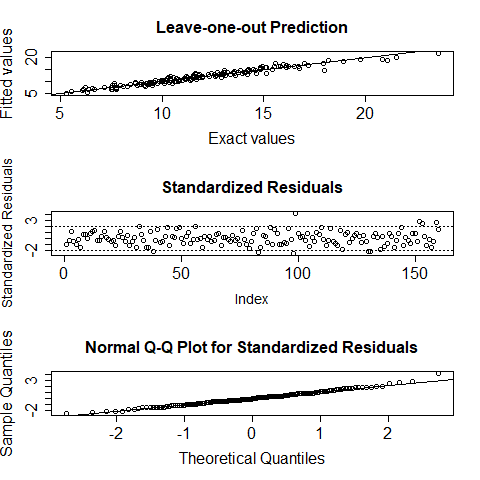}
	\end{minipage}%
	\begin{minipage}{0.5\textwidth}
		\includegraphics[width=1\textwidth]{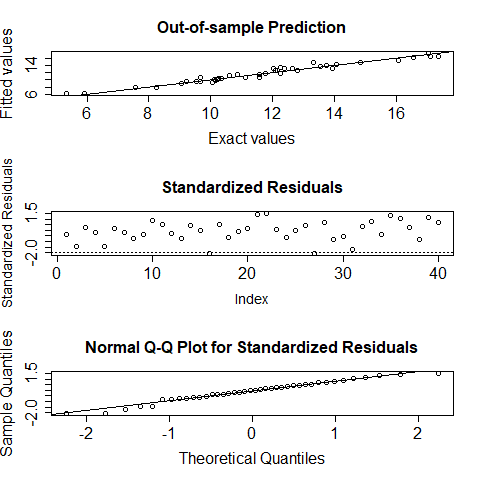}
	\end{minipage}
	\flushleft{(b) $\kds$+j2}
	\begin{minipage}{0.5\textwidth}
		\includegraphics[width=1\textwidth]{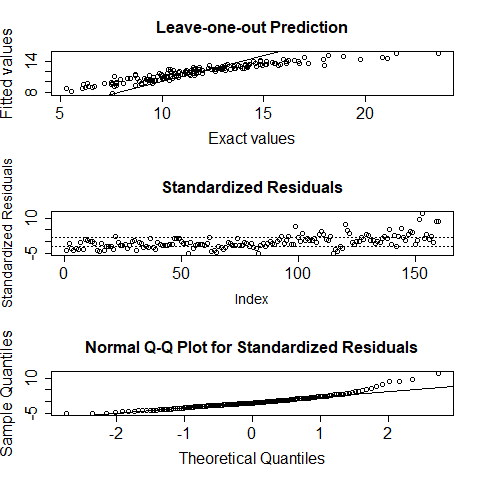}
	\end{minipage}%
	\begin{minipage}{0.5\textwidth}
		\includegraphics[width=1\textwidth]{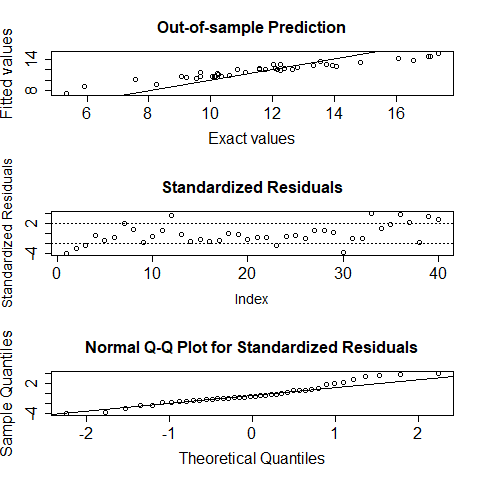}
	\end{minipage}
	\flushleft{(c) $\kds$+j5}
	\begin{minipage}{0.5\textwidth}
		\includegraphics[width=1\textwidth]{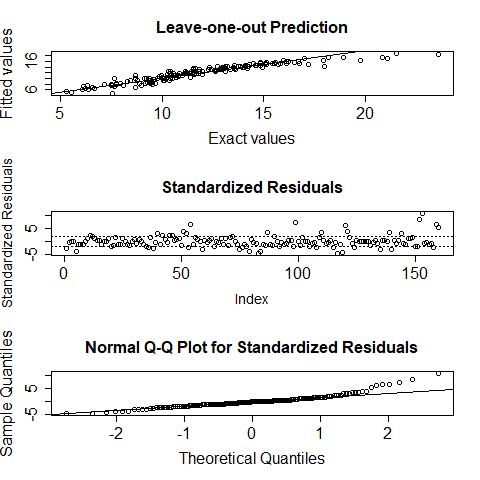}
	\end{minipage}%
	\begin{minipage}{0.5\textwidth}
		\includegraphics[width=1\textwidth]{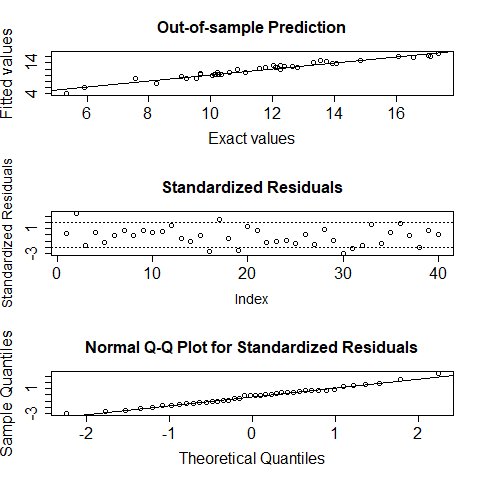}
	\end{minipage}
	\caption{Residual analysis on contamination test case \textbf{(Src B, Geo 2) with (80:20)}, (a) $\kde$, (b) $\kds$+j2 and (c) $\kds$+j5}
	\label{res-S2G2-8020}
\end{figure*}

\clearpage
\newpage
\subsubsection{Optimization performance}

In line with Section 3.3 of the main article, in this section, we present complete results of (1) the number of trials such that the minimum is found by EI with $\kde$ and $\kds+j$ in Table \ref{tab:bocontwithdbs}; (2) the  progress curves in terms of the median value of current best response in Figure \ref{fig:BO-median}; and (3) the 95th percentile of current best response in Figure \ref{fig:BO-per95}.

\begin{table}[h]
	\footnotesize
	\centering
	\caption{Number of trials (out of 100) such that minimum is found by EI algorithms with DE and DS kernels ($\kde$ versus $\kds$+j) on four contamination problems}
	\label{tab:bocontwithdbs}
	\begin{tabular}{|l|c|c|c|c|c|}
		\hline
		\textbf{Problem} & \textbf{EI-$k_{DE}$} & \textbf{EI-$k_{0}+j1$} & \textbf{EI-$k_{0}+j2$} & \textbf{EI-$k_{0}+j3$} & \textbf{EI-$k_{0}+j4$} \\ \hline
		\textbf{(a) Src A, Geo 1} & 100 & 17 & 63 & 87 & 95 \\ \hline
		\textbf{(b) Src A, Geo 2} & 66 & 15 & 36 & 46 & 52 \\ \hline
		\textbf{(c) Src B, Geo 1} & 100 & 26 & 59 & 77 & 95 \\ \hline
		\textbf{(d) Src B, Geo 2} & 78 & 42 & 64 & 76 & 81 \\ \hline
		\textbf{Problem} & \textbf{EI-$k_{0}+j5$} & \textbf{EI-$k_{0}+j6$} & \textbf{EI-$k_{0}+j7$} & \multicolumn{2}{c|}{\textbf{RANDOM}} \\ \hline
		\textbf{(a) Src A, Geo 1} & 98 & 96 & 97 & \multicolumn{2}{c|}{0} \\ \hline
		\textbf{(b) Src A, Geo 2} & 46 & 47 & 44 & \multicolumn{2}{c|}{0} \\ \hline
		\textbf{(c) Src B, Geo 1} & 96 & 96 & 95 & \multicolumn{2}{c|}{0} \\ \hline
		\textbf{(d) Src B, Geo 2} & 82 & 82 & 81 & \multicolumn{2}{c|}{0} \\ \hline
	\end{tabular}
\end{table}

\begin{figure}[h]
	\centering
	\begin{minipage}{0.25\textwidth}
		\includegraphics[width=1\textwidth]{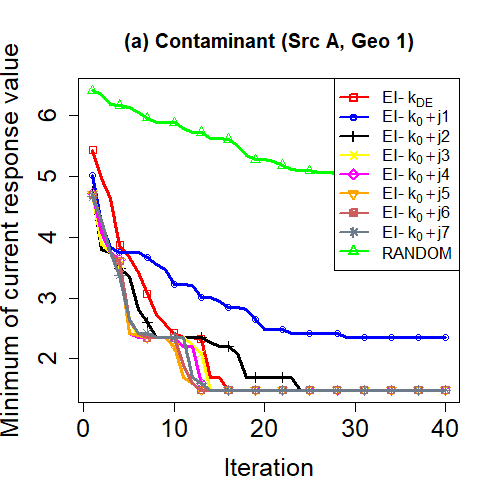}
	\end{minipage}%
	\begin{minipage}{0.25\textwidth}
		\includegraphics[width=1\textwidth]{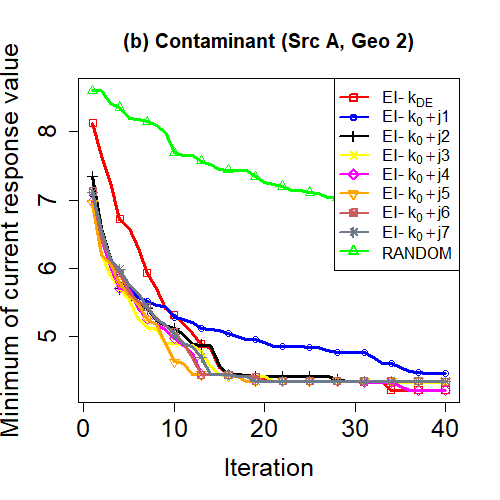}
	\end{minipage}%
	\begin{minipage}{0.25\textwidth}
		\includegraphics[width=1\textwidth]{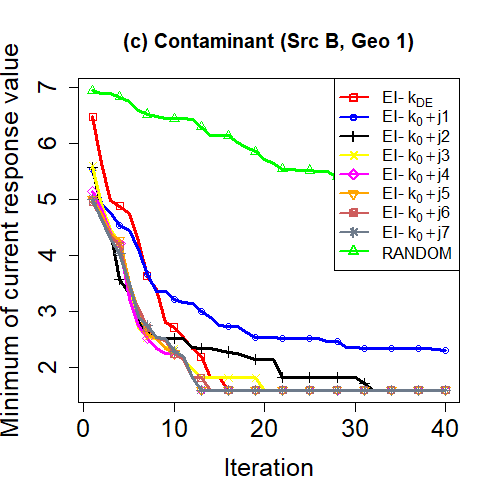}
	\end{minipage}%
	\begin{minipage}{0.25\textwidth}
		\includegraphics[width=1\textwidth]{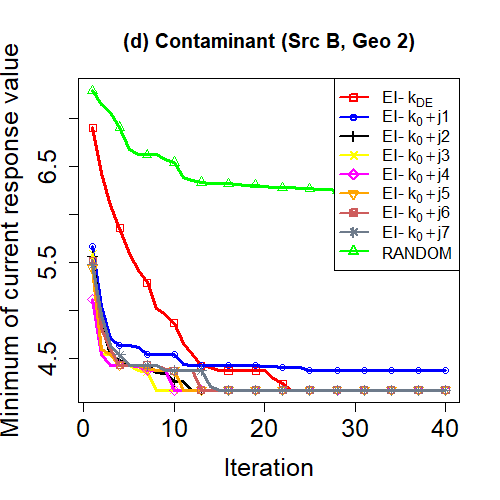}
	\end{minipage}%
	\caption{The median of current best response over 40 iterations on four contamination test cases} \label{fig:BO-median}
\end{figure}

\begin{figure}[h]
	\centering
	\begin{minipage}{0.25\textwidth}
		\includegraphics[width=1\textwidth]{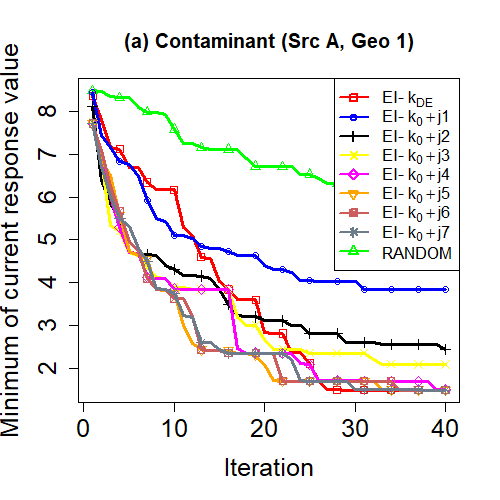}
	\end{minipage}%
	\begin{minipage}{0.25\textwidth}
		\includegraphics[width=1\textwidth]{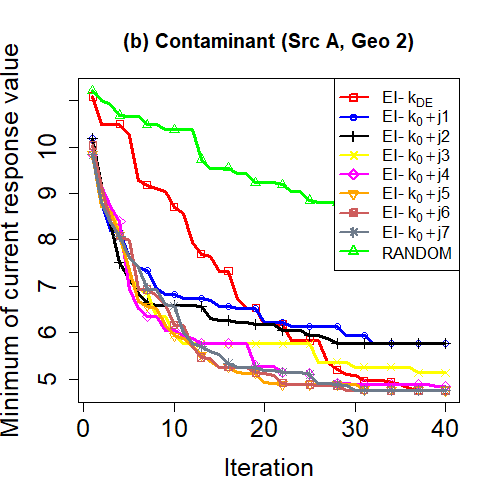}
	\end{minipage}%
	\begin{minipage}{0.25\textwidth}
		\includegraphics[width=1\textwidth]{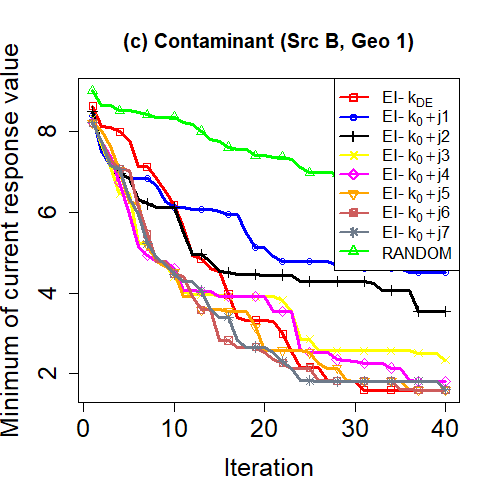}
	\end{minipage}%
	\begin{minipage}{0.25\textwidth}
		\includegraphics[width=1\textwidth]{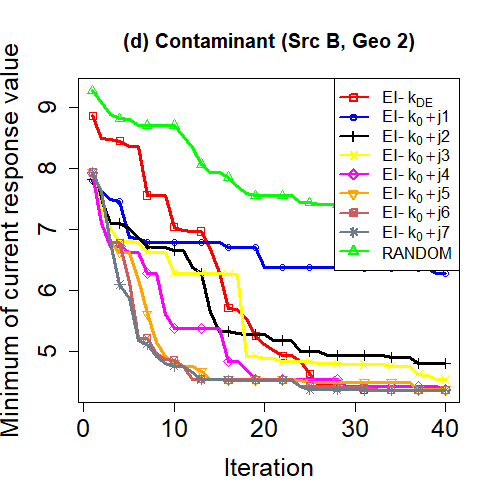}
	\end{minipage}%
	\caption{The 95th percentile of current best response over 40 iterations on four contamination test cases} \label{fig:BO-per95}
\end{figure}

Table \ref{tab:bocontwithdbs} indicates that with the DE kernel, EI could locate the true minimum for more replications than that with the DS kernels (at all jitter levels) for all problems, except for Source B, Geology 2. The progress curves of median and 95th percentile values suggest that regardless of the jitter level added, EI-$\kds+j$ method decreases the function value quickly at the beginning of the course when the kernel is still very well conditioned. With more points in the observation sets, jitter cannot be avoided as the kernel becomes ill-conditioned. When this happens, the performance of $\kds+j$ heavily depends on the jitter levels, as the progress curve starts to flatten out. Notice how the EI-$\kde$ curve crosses the EI-$\kds+j$ one in the 95th percentile plots. Because the model accuracy as well as optimization performance of the DS kernel relies on the jitter levels, this makes the approach less robust than the DE kernel.

\clearpage
\newpage
\subsection{Complementary residual analyses for the synthetic and Castem test cases}
\begin{figure*}[h!]
	\centering
	\flushleft{(a) $\kde$}
	\begin{minipage}{0.5\textwidth}
		\includegraphics[width=1\textwidth]{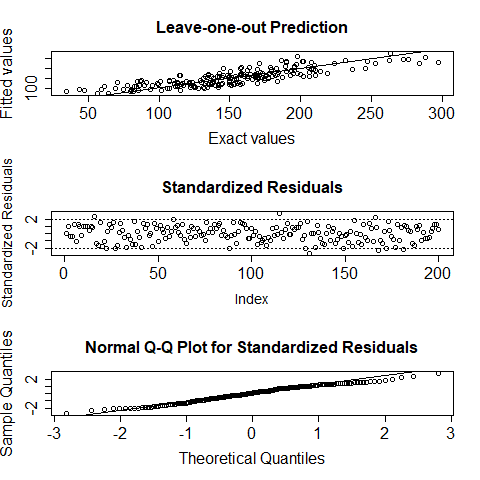}
	\end{minipage}%
	\begin{minipage}{0.5\textwidth}
		\includegraphics[width=1\textwidth]{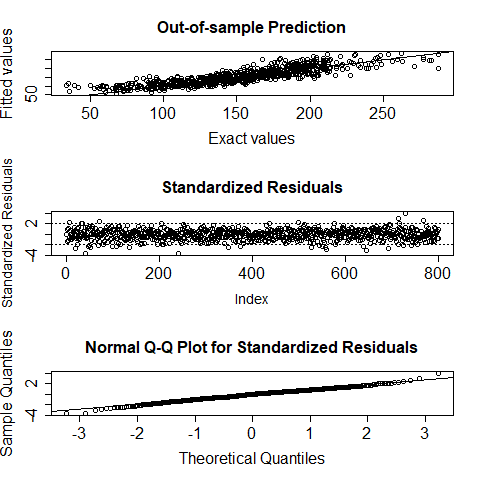}
	\end{minipage}
	\flushleft{(b) $\kds$}
	\begin{minipage}{0.5\textwidth}
		\includegraphics[width=1\textwidth]{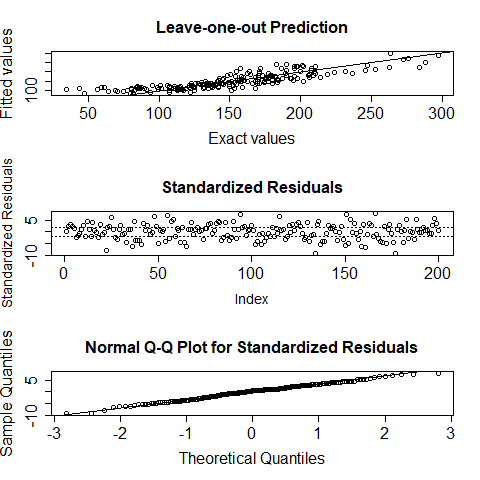}
	\end{minipage}%
	\begin{minipage}{0.5\textwidth}
		\includegraphics[width=1\textwidth]{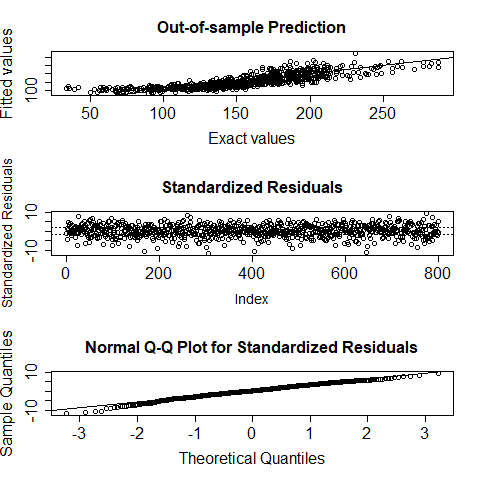}
	\end{minipage}
	\caption{Residual analysis on \textbf{MAX with (20:80)}, (a) $\kde$ and (b) $\kds$}
	\label{res-max-2080}
\end{figure*}

\begin{figure*}[h!]
	\centering
	\flushleft{(a) $\kde$}
	\begin{minipage}{0.5\textwidth}
		\includegraphics[width=1\textwidth]{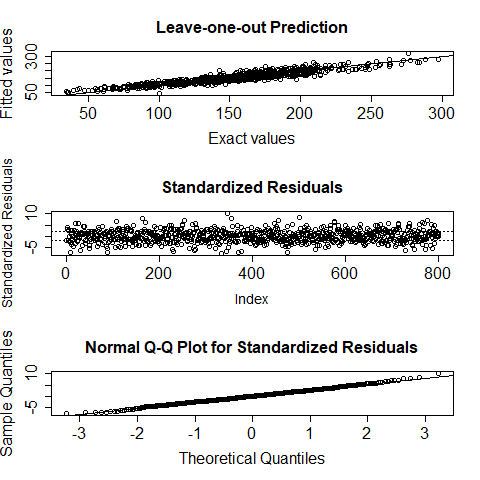}
	\end{minipage}%
	\begin{minipage}{0.5\textwidth}
		\includegraphics[width=1\textwidth]{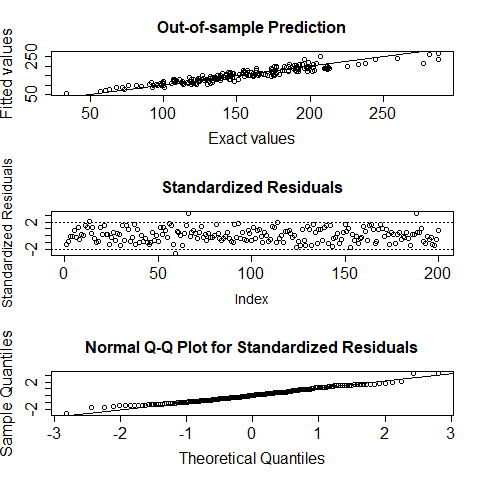}
	\end{minipage}
	\flushleft{(b) $\kds$}
	\begin{minipage}{0.5\textwidth}
		\includegraphics[width=1\textwidth]{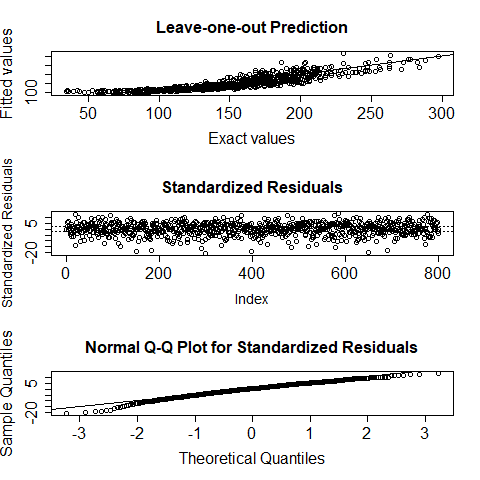}
	\end{minipage}%
	\begin{minipage}{0.5\textwidth}
		\includegraphics[width=1\textwidth]{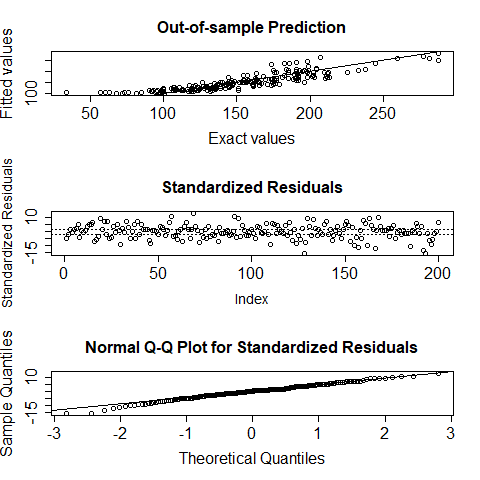}
	\end{minipage}
	\caption{Residual analysis on \textbf{MAX with (80:20)}, (a) $\kde$ and (b) $\kds$}
	\label{res-max-8020}
\end{figure*}

\clearpage
\newpage
\begin{figure*}[h!]
	\centering
	\flushleft{(a) $\kde$}
	\begin{minipage}{0.5\textwidth}
		\includegraphics[width=1\textwidth]{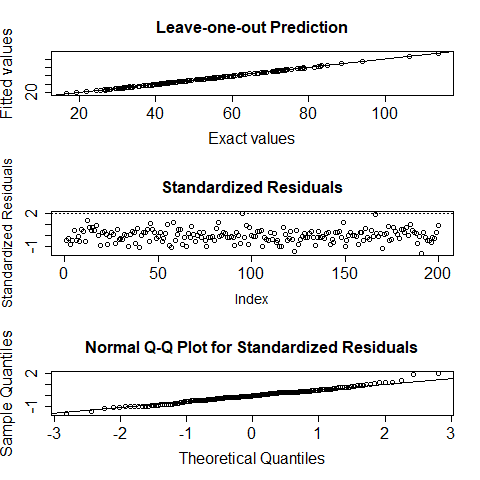}
	\end{minipage}%
	\begin{minipage}{0.5\textwidth}
		\includegraphics[width=1\textwidth]{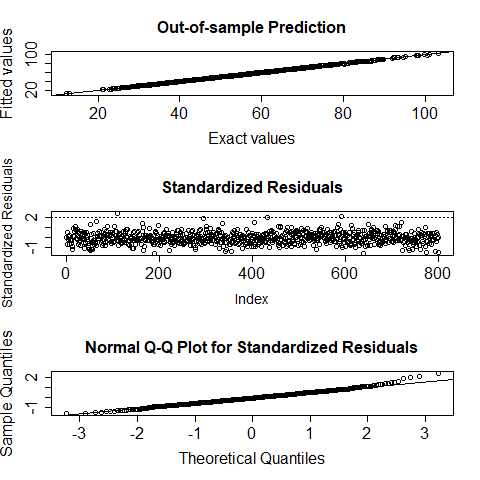}
	\end{minipage}
	\flushleft{(b) $\kds$}
	\begin{minipage}{0.5\textwidth}
		\includegraphics[width=1\textwidth]{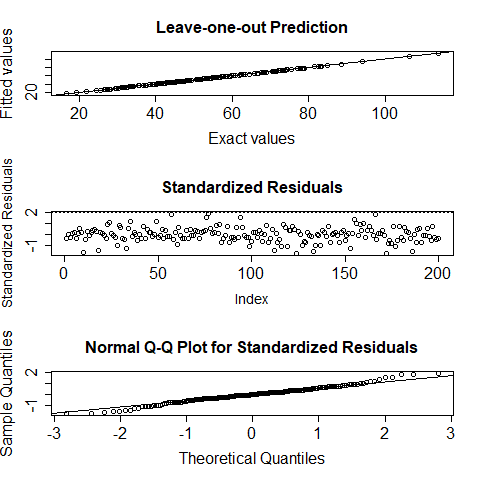}
	\end{minipage}%
	\begin{minipage}{0.5\textwidth}
		\includegraphics[width=1\textwidth]{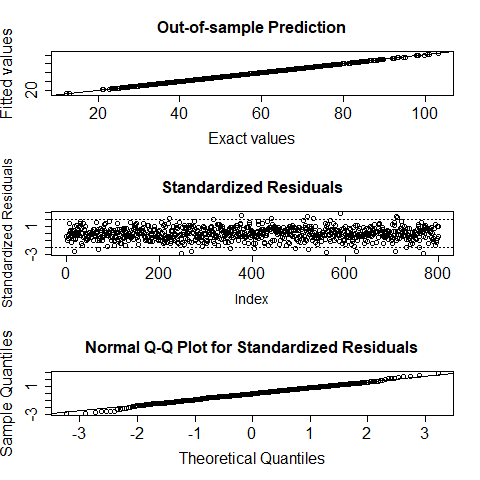}
	\end{minipage}
	\caption{Residual analysis on \textbf{MEAN with (20:80)}, (a) $\kde$ and (b) $\kds$}
	\label{res-mean-2080}
\end{figure*}

\begin{figure*}[h!]
	\centering
	\flushleft{(a) $\kde$}
	\begin{minipage}{0.5\textwidth}
		\includegraphics[width=1\textwidth]{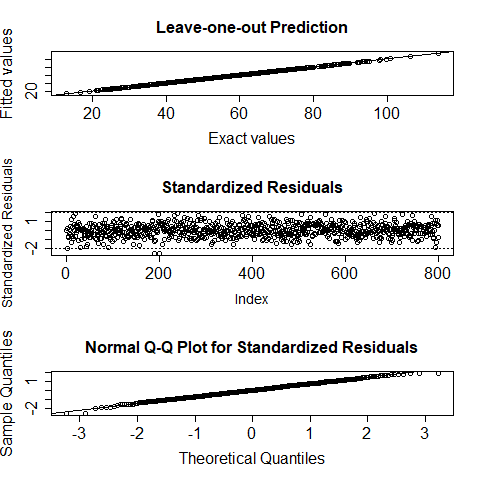}
	\end{minipage}%
	\begin{minipage}{0.5\textwidth}
		\includegraphics[width=1\textwidth]{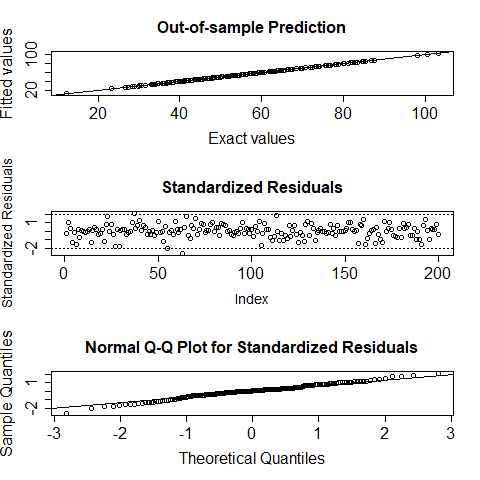}
	\end{minipage}
	\flushleft{(b) $\kds$}
	\begin{minipage}{0.5\textwidth}
		\includegraphics[width=1\textwidth]{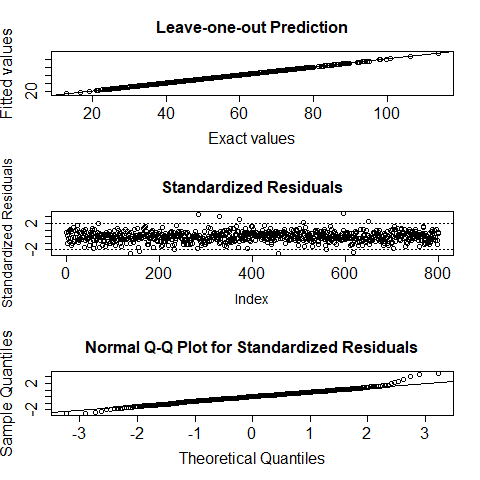}
	\end{minipage}%
	\begin{minipage}{0.5\textwidth}
		\includegraphics[width=1\textwidth]{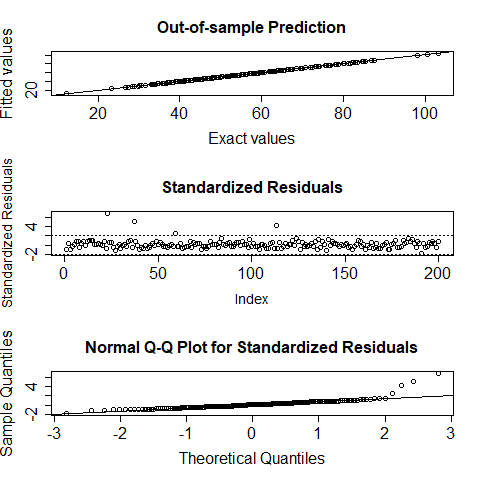}
	\end{minipage}
	\caption{Residual analysis on \textbf{MEAN with (80:20)}, (a) $\kde$ and (b) $\kds$}
	\label{res-mean-8020}
\end{figure*}

\clearpage
\newpage
\begin{figure*}[h!]
	\centering
	\flushleft{(a) $\kde$}
	\begin{minipage}{0.5\textwidth}
		\includegraphics[width=1\textwidth]{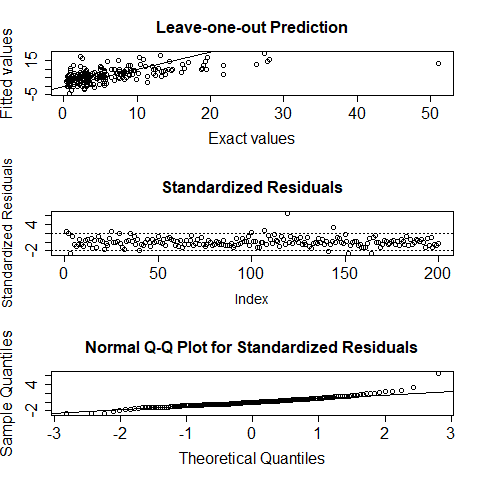}
	\end{minipage}%
	\begin{minipage}{0.5\textwidth}
		\includegraphics[width=1\textwidth]{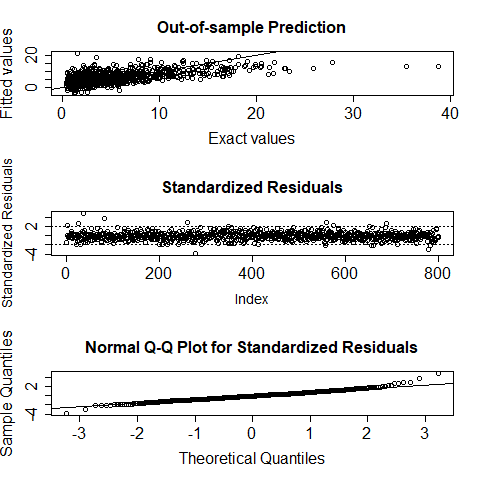}
	\end{minipage}
	\flushleft{(b) $\kds$}
	\begin{minipage}{0.5\textwidth}
		\includegraphics[width=1\textwidth]{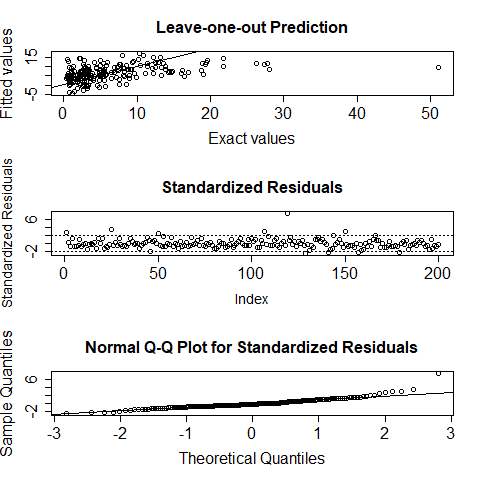}
	\end{minipage}%
	\begin{minipage}{0.5\textwidth}
		\includegraphics[width=1\textwidth]{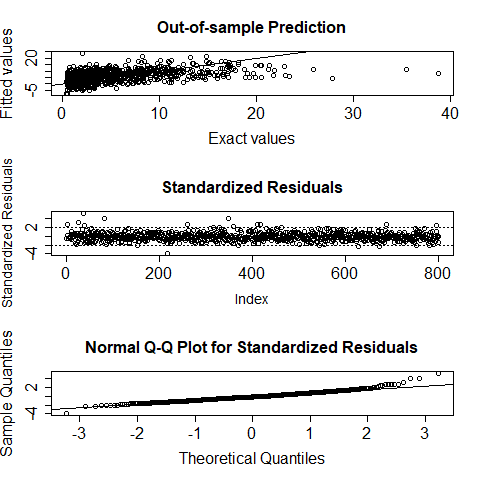}
	\end{minipage}
	\caption{Residual analysis on \textbf{MIN with (20:80)}, (a) $\kde$ and (b) $\kds$}
	\label{res-min-2080}
\end{figure*}

\begin{figure*}[h!]
	\centering
	\flushleft{(a) $\kde$}
	\begin{minipage}{0.5\textwidth}
		\includegraphics[width=1\textwidth]{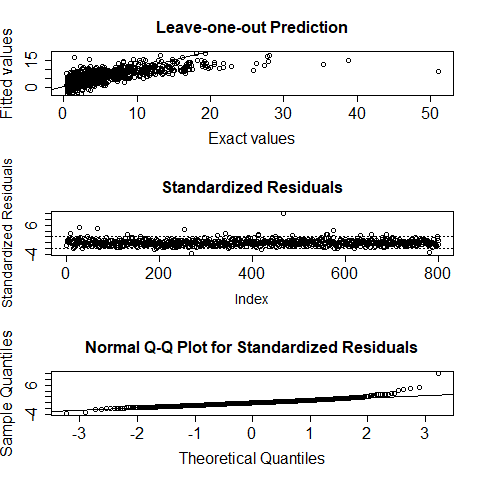}
	\end{minipage}%
	\begin{minipage}{0.5\textwidth}
		\includegraphics[width=1\textwidth]{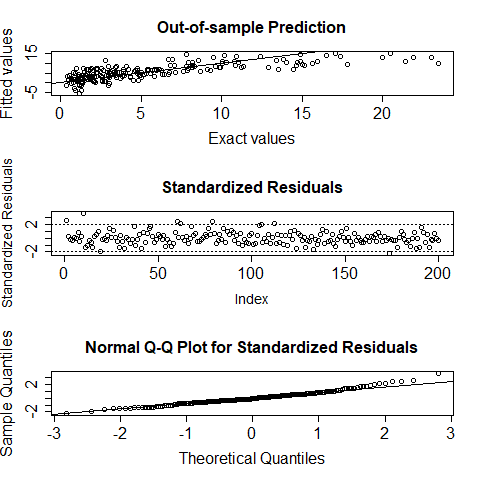}
	\end{minipage}
	\flushleft{(b) $\kds$}
	\begin{minipage}{0.5\textwidth}
		\includegraphics[width=1\textwidth]{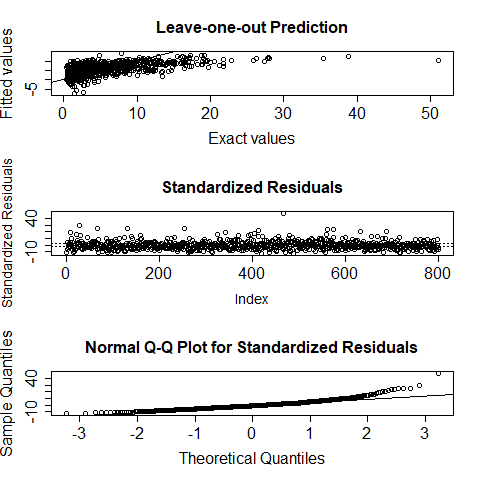}
	\end{minipage}%
	\begin{minipage}{0.5\textwidth}
		\includegraphics[width=1\textwidth]{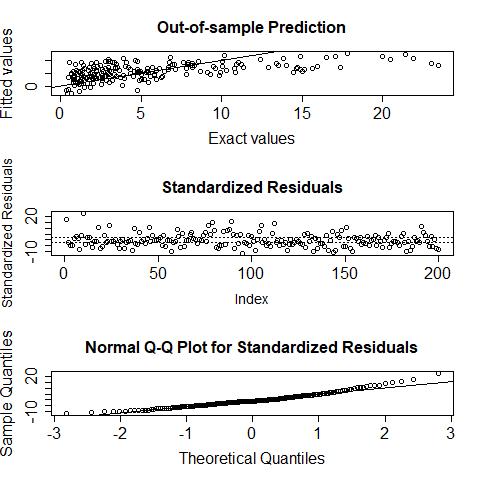}
	\end{minipage}
	\caption{Residual analysis on \textbf{MIN with (80:20)}, (a) $\kde$ and (b) $\kds$}
	\label{res-min-8020}
\end{figure*}

\clearpage
\newpage
\begin{figure*}[h!]
	\centering
	\flushleft{(a) $\kde$}
	\begin{minipage}{0.5\textwidth}
		\includegraphics[width=1\textwidth]{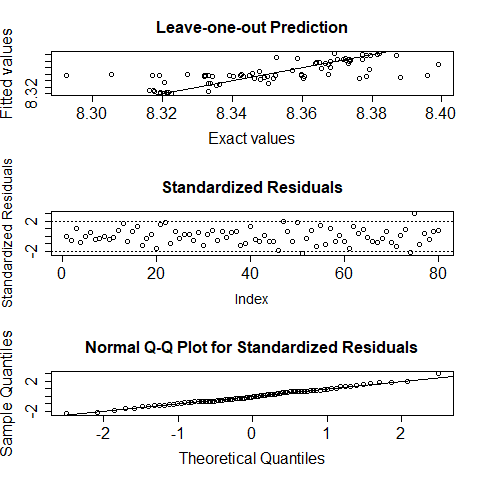}
	\end{minipage}%
	\begin{minipage}{0.5\textwidth}
		\includegraphics[width=1\textwidth]{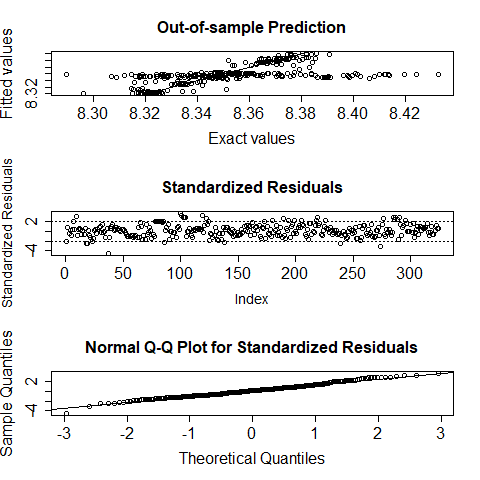}
	\end{minipage}
	\flushleft{(b) $\kds$}
	\begin{minipage}{0.5\textwidth}
		\includegraphics[width=1\textwidth]{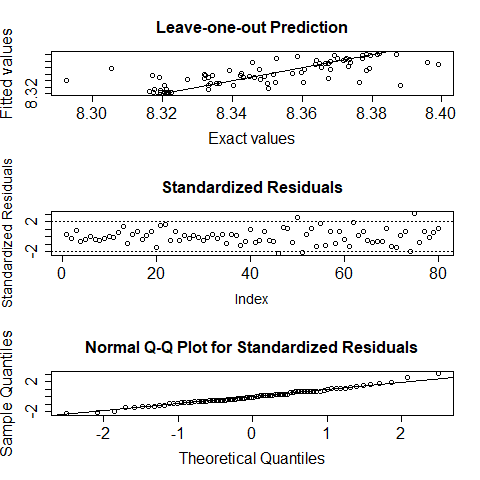}
	\end{minipage}%
	\begin{minipage}{0.5\textwidth}
		\includegraphics[width=1\textwidth]{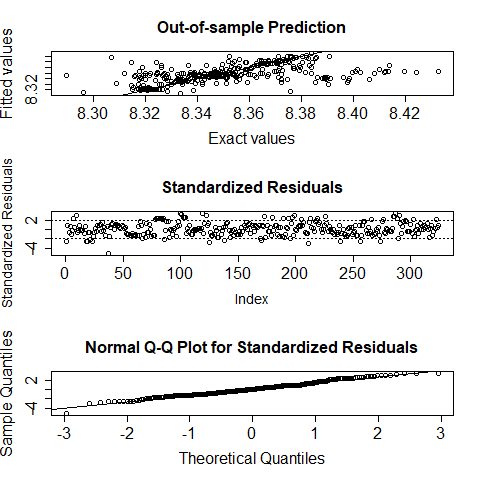}
	\end{minipage}
	\caption{Residual analysis on \textbf{CASTEM with (20:80)}, (a) $\kde$ and (b) $\kds$}
	\label{res-castem-2080}
\end{figure*}

\begin{figure*}[h!]
	\centering
	\flushleft{(a) $\kde$}
	\begin{minipage}{0.5\textwidth}
		\includegraphics[width=1\textwidth]{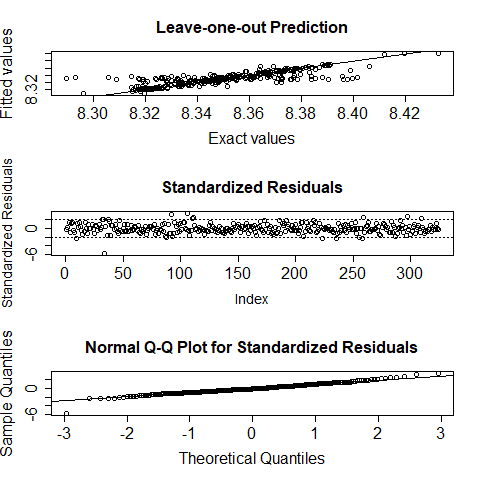}
	\end{minipage}%
	\begin{minipage}{0.5\textwidth}
		\includegraphics[width=1\textwidth]{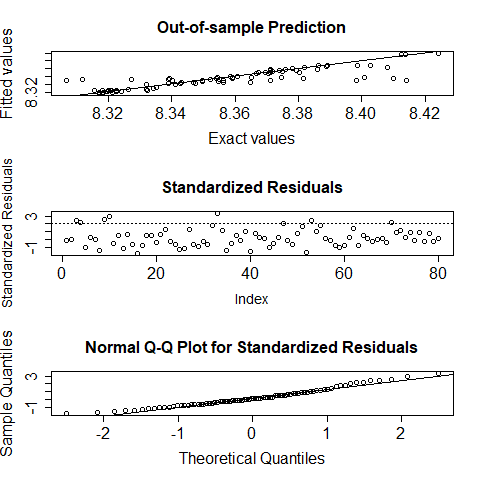}
	\end{minipage}
	\flushleft{(b) $\kds$}
	\begin{minipage}{0.5\textwidth}
		\includegraphics[width=1\textwidth]{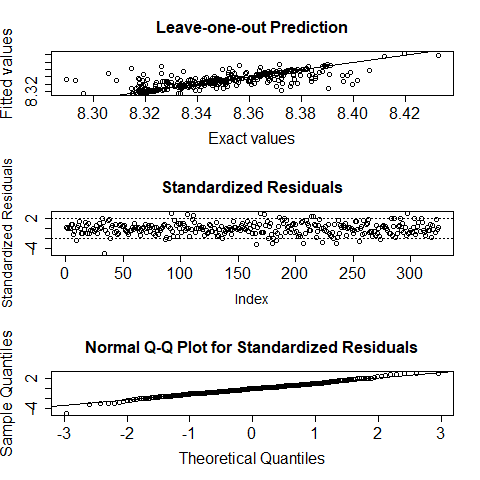}
	\end{minipage}%
	\begin{minipage}{0.5\textwidth}
		\includegraphics[width=1\textwidth]{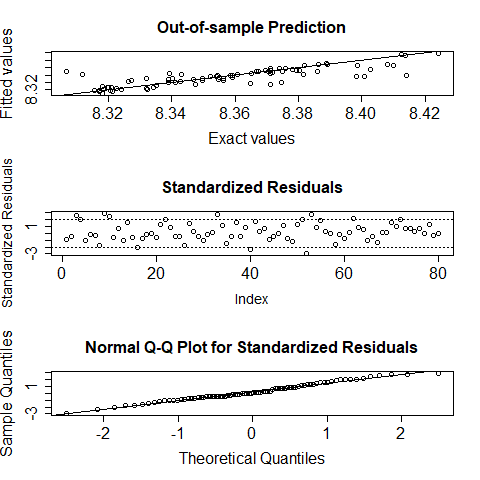}
	\end{minipage}
	\caption{Residual analysis on \textbf{CASTEM with (80:20)}, (a) $\kde$ and (b) $\kds$}
	\label{res-castem-8020}
\end{figure*}

\end{document}